\documentclass{article}

\PassOptionsToPackage{numbers,sort&compress}{natbib}
\usepackage[preprint]{neurips_2026}

\usepackage[utf8]{inputenc}
\usepackage[T1]{fontenc}
\usepackage{hyperref}
\hypersetup{hidelinks}
\usepackage{url}
\usepackage{booktabs}
\usepackage{longtable}
\usepackage{amsfonts}
\usepackage{amsmath}
\usepackage{amssymb}
\usepackage{amsthm}
\usepackage{nicefrac}
\usepackage{microtype}
\usepackage{xcolor}
\usepackage{graphicx}
\usepackage{array}
\usepackage{titlesec}
\usepackage{soul}

\definecolor{linkblue}{RGB}{0,0,200}
\newcommand{\codelink}[1]{\href{#1}{\textcolor{linkblue}{\ul{link}}}}

\newtheorem{theorem}{Theorem}
\newtheorem{proposition}{Proposition}

\theoremstyle{plain}
\newtheorem*{restatement}{Restatement}

\titlespacing*{\section}{0pt}{4pt plus 1pt minus 1pt}{2pt plus 1pt}
\titlespacing*{\subsection}{0pt}{3pt plus 1pt minus 1pt}{1pt plus 1pt}
\titlespacing*{\paragraph}{0pt}{1pt plus 1pt minus 0pt}{0.5em}

\title{CAST: Causal Anchored Simplex Transport\\for Distribution-Valued Time Series}

\author{%
  Jiecheng Lu\\
  Georgia Institute of Technology\\
  \texttt{jlu414@gatech.edu}\\
  \And
  Jieqi Di\\
  Georgia Institute of Technology\\
  \texttt{jdi7@gatech.edu}\\
  \AND
  Runhua Wu\\
  Georgia Institute of Technology\\
  \texttt{rwu330@gatech.edu}\\
  \And
  Yuwei Zhou\\
  Indiana University\\
  \texttt{yuwzhou@iu.edu}\\
}

\begin{document}

\maketitle

\begin{abstract}
Many decision-facing stochastic systems are observed through aggregate distributions rather than scalar trajectories: queue occupancies, mobility shares, public-health mixtures, generation-source shares, ecological compositions, and air-quality severity profiles all live on the probability simplex and evolve over time. We study causal (time-respecting online) forecasting for these distribution-valued time series and argue that the transition operator itself should be structured around the simplex. We introduce CAST (Causal Anchored Simplex Transport), a successor-local operator that (i) retrieves empirical successors from causal context, (ii) stabilizes them with a persistence anchor, and (iii) applies a bounded local stochastic transport on ordered supports; every stage preserves the simplex by construction. We identify a structural failure mode, latent transition-kernel aliasing, where similar observed distributions evolve differently under different contextual regimes, and prove that any forecaster depending only on an aliased summary incurs an irreducible weighted Jensen--Shannon excess-risk lower bound, while the CAST hypothesis class contains the regime-aware Bayes successor; for ordered supports an additional Pinsker separation holds whenever the transported successor lies outside the no-transport anchor hull. On a suite of eleven public and simulated benchmarks spanning ecology, energy, diet, mortality, employment, air quality, severe weather, mobility, and $G/G/1$, $G_t/G/1$ queue occupancy, CAST achieves the best average rank on both one-step KL (1.27) and autoregressive rollout JSD (1.91), winning 8/11 sections on each metric against a broad statistical, compositional, recurrent, convolutional, Transformer, and modern time-series baseline set, and top-2 on all 11 sections for offline KL. Component ablations and a controlled synthetic aliasing experiment corroborate the theory. The code release is available at this~\codelink{https://anonymous.4open.science/r/Causal-Anchored-Simplex-Transport-8775}.

\end{abstract}

\section{Introduction}
\label{sec:introduction}

Modern stochastic systems are increasingly observed, controlled, and planned through aggregate distributions rather than isolated events or scalar trajectories. The rapid deployment of large language model services has in particular turned cloud inference into a high-throughput stochastic system with heterogeneous request sizes, variable service times, memory and cache constraints, and batching effects, placing queueing and scheduling at the center of high-performance inference \citep{mitzenmacher2025queueing}. For capacity planning, rebalancing, and risk-sensitive control, what matters is how the distribution of system states---queue length, workload, congestion regime---evolves over the next several intervals. The same pattern recurs across mobility platforms \citep{nyctlc_trip_records}, public-health panels, power systems, ecological studies, and air-quality severity reports.

\paragraph{Distribution-valued forecasting.} We formalize this by letting
\[
    p_t \in \Delta^{D-1}
    := \Bigl\{p\in\mathbb{R}_{\ge 0}^{D}:\textstyle\sum_{j=1}^{D} p(j)=1\Bigr\}
\]
denote a probability vector observed at time $t$, and studying the causal one-step map $\hat p_{t+1}=f(p_{1:t})$ that uses only present and past distributions. Here causal means time-respecting online prediction, not interventional causal inference. This is a compositional time series problem: observations are not arbitrary Euclidean vectors but proportions of a whole, and the simplex is the natural sample space \citep{aitchison1982statistical,aitchison1986statistical}. Classical compositional forecasters apply log-ratio transformations, state-space recursions, exponential smoothing, or VAR dynamics in unconstrained coordinates and then map back to the simplex \citep{snyder2015forecasting,snyder2017forecasting,barcelo2011compositional}; Wasserstein autoregressive and functional distributional models treat probability distributions as serially dependent objects \citep{zhang2022wasserstein,jiang2022wasserstein}. These provide important statistical baselines but leave a modeling question open: can the transition operator itself be structured as a causal map on the simplex?

\paragraph{Why the full distribution.} Forecasting the whole distribution provides a signal that a single trajectory forecast cannot. A future window may place substantial mass on both a light-load and a severe-congestion regime, or split across compositions with very different operational consequences. Distribution-valued forecasts are therefore the natural primitive for early warning, resource allocation, risk-sensitive planning, and distributionally robust optimization \citep{bellemare2017distributional,li2022quantile,chow2015cvar,wiesemann2014dro}. Queueing makes this concrete: a discrete-time single-server queue obeys $L_{t+1}=\max\{0,\,L_t+A_t-C_t\}$ \citep{lindley1952theory,harchol2013performance,palomo2020learning}, and recent work casts next-step queue-length distribution prediction as its own forecasting target on simulated $G/G/1$ and $G_t/G/1$ systems \citep{di2025transformerqueue}. Setting $p_t(k)\approx\mathbb P(L_t=k)$ turns the problem into one of forecasting how probability mass redistributes across an ordered support---directly exposing the probability of idleness, congestion, or overflow.

\paragraph{The structural challenge: latent-kernel aliasing.} In distribution-valued dynamics the same observed composition can arise from different latent transition mechanisms: a queue occupancy histogram may look identical while the system is entering high load or draining after a burst; a similar air-quality histogram may appear under different seasonal or regional regimes; similar ecological compositions may have different successors because the sites have different histories. In such cases the current distribution alone does not determine the next distribution, and a forecaster depending only on a fixed visible summary must collapse multiple latent kernels into one averaged successor. Existing deep time-series architectures---recurrent, convolutional, attention-based, residual-MLP, patching, inverted, multiscale, frequency-decomposed, cross-dimension, channel-mixing, exogenous-aware, foundation-model \citep{salinas2020deepar,lim2021temporal,oreshkin2020nbeats,challu2023nhits,zhou2021informer,wu2021autoformer,zhou2022fedformer,zeng2023transformers,nie2023time,wu2023timesnet,zhang2023crossformer,das2023tide,chen2023tsmixer,lu2024arm,liu2024itransformer,wang2024timexer,lu2024cats,wang2024timemixer,lu2025wave,ansari2024chronos,lu2026hypermlp,das2024timesfm,kim2026stretchtime}---can be cross-applied to probability vectors, but their output heads enforce validity through a final softmax and do not encode empirical successor locality, latent-regime disambiguation, or ordered mass motion.

\begin{figure}[t]
    \centering
    \includegraphics[width=0.96\textwidth]{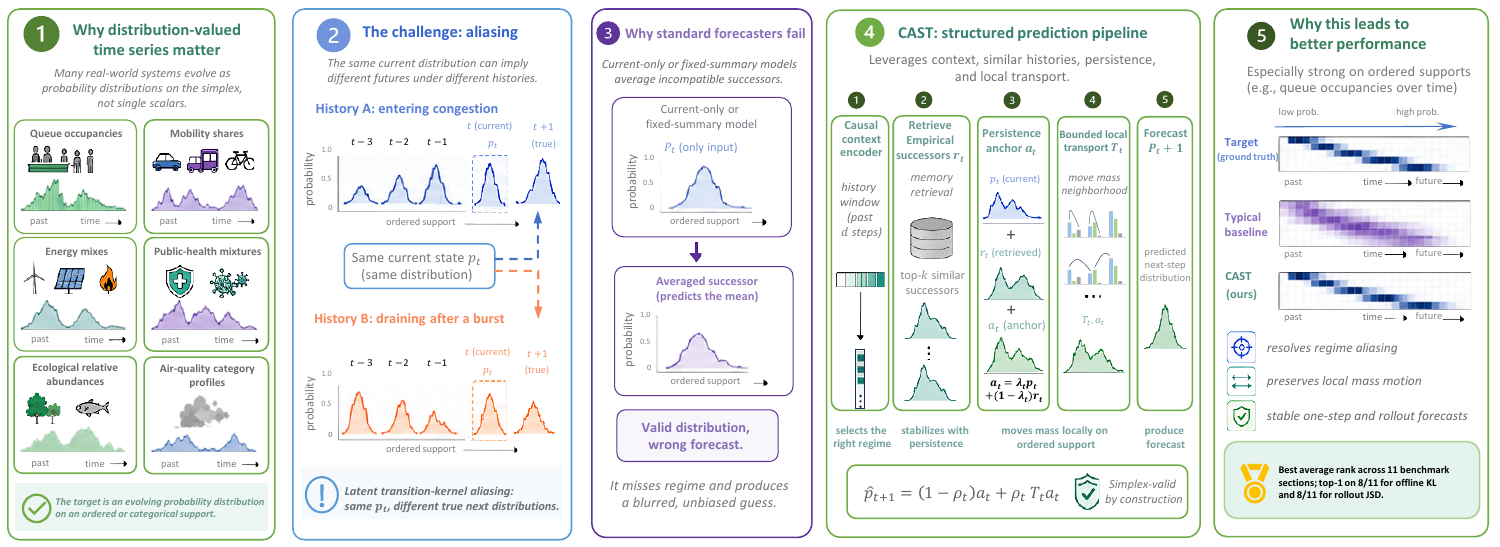}
    \caption{\textbf{CAST overview.} (1) Real-world systems evolve as distributions on a simplex rather than scalar trajectories; (2) the same $p_t$ can arise from histories with different successors, so (3) current-only forecasters collapse to a mixture average; (4) CAST encodes history, retrieves empirical successors $r_t$, forms a persistence--retrieval anchor $a_t=\lambda_t p_t+(1-\lambda_t)r_t$, and applies bounded local transport $T_t$ on ordered supports, $\hat p_{t+1}=(1-\rho_t)a_t+\rho_t T_t a_t$, simplex-valid by construction; (5) this resolves regime aliasing and stabilizes rollout.}
    \label{fig:cast-overview}
\end{figure}

\paragraph{CAST.} We introduce CAST, a causal successor-local operator on the simplex (Figure~\ref{fig:cast-overview}). Given the causal history, CAST retrieves empirical successors from past context-successor pairs, forms a persistence-successor anchor, and, when the support is ordered, applies a bounded local stochastic transport operator:
\[
    \hat p_{t+1}
    =
    (1-\rho_t)\,a_t+\rho_t\,T_t a_t,
    \qquad
    a_t=\lambda_t p_t+(1-\lambda_t)\,r_t,
\]
where $r_t$ is a retrieved causal successor, $a_t$ is a persistence--retrieval anchor, $T_t$ is a radius-bounded row-stochastic transport, and $\rho_t$ is a bounded transport strength. Every stage preserves the simplex by construction. This structure encodes a simple inductive bias: future distributions should be close to empirical successors observed in similar causal contexts, stabilized by persistence and corrected only by small local mass motion when the support has a natural order. Retrieval addresses aliasing (two visible distributions with different histories have different successors); transport addresses ordered mass motion (on queue occupancies, AQI levels, ages, severity bins, probability mass should move locally, not through an unconstrained global correction).

\paragraph{Contributions.} (i) Model: we propose CAST, a causal successor-local transition model for distribution-valued time series, whose prediction is a convex combination of an empirical-successor anchor and a bounded local stochastic transport of that anchor on ordered supports. (ii) Theory: we prove a weighted Jensen--Shannon aliasing lower bound that any fixed-summary forecaster must pay when multiple latent regimes share the same observed summary, show that the CAST hypothesis class contains the regime-aware Bayes successor under mild memory assumptions, and give an ordered-support Pinsker separation for transports that leave the anchor hull; approximation bounds under imperfect retrieval/transport and simplex-closure/rollout-drift control complete the picture. (iii) Benchmarks: on an 11-section benchmark suite spanning ecology, energy, diet, mortality, employment, air quality, severe weather, mobility, and queueing, CAST obtains average ranks of 1.27 (offline KL) and 1.91 (rollout JSD), winning 8/11 sections on each metric against statistical, compositional, recurrent, convolutional, Transformer, and modern time-series baselines, and is top-2 on all 11 sections for offline KL. A targeted five-seed robustness study, a component ablation, and a controlled synthetic aliasing experiment support the theory-driven reading of the results.

\section{Related Work}
\label{sec:related-work}

\paragraph{Compositional and distributional time series.} Aitchison's log-ratio program \citep{aitchison1982statistical,aitchison1986statistical} treats the simplex as the sample space for compositions; classical methods fit VAR, state-space, or exponential-smoothing recursions in ilr/alr coordinates and map back \citep{snyder2015forecasting,snyder2017forecasting,barcelo2011compositional}. Distribution time series have also been studied via Wasserstein autoregressive and functional-data models \citep{zhang2022wasserstein,jiang2022wasserstein,wang2025koopman}, with optimal-transport tools for geometry on probability measures \citep{cuturi2013sinkhorn,peyre2019computational}. These approaches motivate the object of study but do not jointly implement causal successor retrieval, persistence anchoring, and ordered local transport.

\paragraph{Deep sequence forecasting.} Modern time-series architectures span recurrent, attention-based multi-horizon, residual-MLP, hierarchical interpolation, Transformer, patching, periodic decomposition, inverted, multiscale mixing, frequency-decomposed, cross-dimension, channel-mixing, exogenous-aware, in-context, language-model-reprogramming, and foundation-model approaches \citep{salinas2020deepar,lim2021temporal,oreshkin2020nbeats,challu2023nhits,zhou2021informer,wu2021autoformer,zhou2022fedformer,zeng2023transformers,nie2023time,wu2023timesnet,zhang2023crossformer,das2023tide,chen2023tsmixer,lu2024arm,liu2024itransformer,wang2024timexer,lu2024cats,wang2024timemixer,lu2025incontext,jin2024timellm,lu2025wave,ansari2024chronos,lu2026fem,woo2024moirai,das2024timesfm,pati2026caps}, with concurrent progress on efficient and linear-attention sequence backbones that scale subquadratically in context length \citep{katharopoulos2020transformers,choromanski2021performer,gu2024mamba,yang2024gla,lu2025lineartransformers,sun2023retnet,lu2025zeros}. With a simplex output head they are our neural baselines, but they learn arbitrary output logits rather than a transition operator structured around successor locality, regime disambiguation, or ordered mass motion. Neural temporal point processes \citep{du2016recurrent,mei2017neural,shchur2021neural} instead predict future event times and marks.

\paragraph{Queueing and risk-sensitive decisions.} Queueing theory supplies the motivating example of distribution-valued dynamics on an ordered support \citep{lindley1952theory,harchol2013performance,mitzenmacher2025queueing,palomo2020learning}; our queue benchmark follows the simulation protocol of \citet{di2025transformerqueue}. Downstream, distributional RL models return distributions \citep{bellemare2017distributional}, quantile/CVaR MDPs optimize risk-sensitive summaries \citep{li2022quantile,chow2015cvar}, and DRO works with ambiguity sets over distributions \citep{wiesemann2014dro}. Nonlinear forecasting via nearest analogs \citep{sugihara1990nonlinear} is a precedent for CAST's retrieval; our heads are learned and causal within a sequence.

\section{The CAST Model}
\label{sec:method}

\paragraph{Problem.} We observe a distribution-valued time series $p_1,\ldots,p_T\in\Delta^{D-1}$ and seek a causal one-step predictor $\hat p_{t+1}=f_\theta(p_{1:t})$; coordinates may be unordered (compositions) or ordered (queue occupancy, AQI, age). CAST parameterizes $f_\theta$ not as an unconstrained map to logits but as a structured simplex transition: history $\to$ empirical successor (retrieval) $\to$ persistence--successor anchor $\to$ bounded local transport. The complete ordered-support transition is
\begin{equation}
\hat p_{t+1}
\;=\;
(1-\rho_t)\bigl[\lambda_t p_t+(1-\lambda_t)r_t\bigr]
\;+\;
\rho_t\, T_t\bigl[\lambda_t p_t+(1-\lambda_t)r_t\bigr],
\label{eq:cast-full}
\end{equation}
with $r_t$ a retrieved empirical successor, $\lambda_t\in[0,1]$ a persistence gate, $T_t$ a radius-bounded row-stochastic transport, and $\rho_t\in[0,\rho_{\max}]$ a bounded transport strength with $\rho_{\max}\le 1$ (so that each stage is a convex combination). For unordered supports the transport stage is disabled ($\rho_t\equiv 0$), yielding the successor-anchor map $\hat p_{t+1}=\lambda_t p_t+(1-\lambda_t)r_t$.

\paragraph{Causal representation and retrieval.} CAST computes causal hidden states $h_{1:T}=\mathrm{Enc}_\theta(p_{1:T})$---a decoder-only Transformer over simplex-aware embeddings---and retrieves from the strictly causal memory $\mathcal{M}_t=\{(h_s,p_{s+1}):s<t\}$ (the strict inequality prevents leakage of $p_{t+1}$). For each retrieval head $m$, queries $q_t^{(m)}=W_q^{(m)}h_t$ and keys $k_s^{(m)}=W_k^{(m)}h_s$ produce causal softmax weights $\alpha_{t,s}^{(m)}\propto\exp\!\bigl((q_t^{(m)})^\top k_s^{(m)}/\sqrt{d_r}\bigr)$ over $s<t$, and each head outputs $r_t^{(m)}=\sum_{s<t}\alpha_{t,s}^{(m)}\,p_{s+1}$. The retrieved successor $r_t=\sum_{m}\eta_{t,m}\,r_t^{(m)}$ with $\eta_t=\mathrm{softmax}(W_\eta h_t)$ is a convex combination of past simplex observations, hence $r_t\in\Delta^{D-1}$; if no valid past successor exists, CAST falls back to persistence $r_t=p_t$.

\paragraph{Persistence--successor anchor.} Pure retrieval is noisy, so CAST forms $a_t=\lambda_t p_t+(1-\lambda_t)r_t$ with gated persistence weight $\lambda_t=\lambda_{\min}+(\lambda_{\max}-\lambda_{\min})\,\sigma(g_\theta(h_t))$. This interpolates between copying the present distribution ($\lambda_t\!\to\!1$) and reusing the retrieved transition ($\lambda_t\!\to\!0$); bounded gating keeps $a_t\in\Delta^{D-1}$ strictly.

\paragraph{Bounded local transport.} For ordered supports, CAST predicts a row-stochastic radius-1 kernel $K_t(j,o)\ge0$, $\sum_{o\in\{-1,0,+1\}}K_t(j,o)=1$, and applies $(T_t a_t)(k)=\sum_j a_t(j)\sum_o K_t(j,o)\,\mathbf 1\{k=\mathrm{clip}(j+o,1,D)\}$. The correction is mixed back into the anchor at strength $\rho_t\in[0,\rho_{\max}]$. To prevent runaway support drift during multi-step rollout, CAST enforces a realized-mean-shift budget: with $\mu(p)=\sum_j j\,p(j)$ and $\Delta\mu_t=\mu(T_t a_t)-\mu(a_t)$, the raw strength is scaled by $g_t=\min\!\bigl(1,\,B_t/(|\Delta\mu_t|+\epsilon)\bigr)$ where $B_t=\delta_\mu+\delta_\sigma\,\sigma_{\mathrm{supp}}(a_t)$. Transport is therefore local in both support radius and realized mean displacement.

\paragraph{Training.} CAST is trained on teacher-forced adjacent pairs $(p_{1:t},p_{t+1})$ with $\mathcal{L}=\mathcal{L}_{\mathrm{step}}+\lambda_{\mathrm{op}}\,\mathcal R_{\mathrm{op}}$, where $\mathcal{L}_{\mathrm{step}}=\tfrac{1}{|\Omega|}\sum_{(b,t)\in\Omega}\mathrm{KL}(p_{b,t+1}\|\hat p_{b,t+1})$ over unmasked positions $\Omega$ and $\mathcal R_{\mathrm{op}}$ is a small target-free operator prior on the transport (penalizing strong transport, off-identity mass, rough kernels across neighboring bins, and realized mean shift). $\mathcal R_{\mathrm{op}}$ depends only on $T_t$ and $a_t$, never on future targets; no rollout loss, multi-horizon target loss, target-moment loss, oracle gate supervision, or validation/test retrieval memory is used. Rollout evaluation iterates the learned one-step map: each prediction is appended to the context and used as input for subsequent predictions. The decomposition of $\mathcal R_{\mathrm{op}}$ is in Appendix~\ref{app:experimental-details}.

\paragraph{Properties.} Four structural properties hold throughout: (i) Causality---$\hat p_{t+1}$ never depends on $p_{t+1}$; (ii) Simplex closure---$r_t,a_t,T_t a_t,\hat p_{t+1}\in\Delta^{D-1}$ by convexity of each stage; (iii) Identity containment---$(\lambda_t,\rho_t)=(1,0)$ recovers persistence, $(0,0)$ recovers pure successor anchoring; (iv) Bounded drift---$W_1(a_t,\hat p_{t+1})\le\rho_t r$ and $|\mu(\hat p_{t+1})-\mu(a_t)|\le\rho_t B_t$, preventing accumulation of support displacement under multi-step rollout. Compactly, CAST = causal successor retrieval + persistence stabilization + bounded local transport: a structured alternative to a sequence-to-distribution Transformer that preserves empirical transition structure for rollout stability while retaining a small, interpretable correction mechanism for ordered distributional motion.

\section{Theoretical Analysis}
\label{sec:theory}

Our theory makes conditional, structural claims. It does not assert that CAST dominates every forecaster on every process; it identifies a data regime, latent transition-kernel aliasing, in which averaging incompatible successors is unavoidable for aliased summaries, while CAST's retrieval--anchor--transport class contains the corresponding Bayes successor. Three pieces are stated below; full proofs and an approximation/retrieval/consistency supplement appear in Appendix~\ref{app:theory-details}.

\paragraph{Setup.} Let $\mathcal F_t=\sigma(p_{1:t})$ be the causal history and $\mathcal G_t^\phi=\sigma(\phi(p_{1:t}))\subseteq\mathcal F_t$ the summary available to a baseline. The hard case is that two or more histories share the same observed summary $\varphi$ but different conditional successors, indexed by a latent regime $Z_t\in\{1,\ldots,K\}$. On $\{\phi(p_{1:t})=\varphi\}$, let $\pi_z=\mathbb P(Z_t=z\mid\varphi)$ and $u_z=\mathbb E[p_{t+1}\mid Z_t=z,\varphi]$. All KL statements assume relative-interior successors or the same small-probability smoothing used in evaluation ($\Delta_\gamma^{D-1}$ when needed). The no-transport anchor class is $\mathcal A_t=\{\lambda p_t+(1-\lambda)r:\lambda\in[0,1],\,r\in\mathcal R_t\}$.

\begin{theorem}[Latent-kernel aliasing lower bound]
\label{thm:aliasing}
Fix $\varphi$ and suppose $\pi_z>0$ for all $z$ and the regime-specific Bayes successors $u_1,\ldots,u_K$ are not all identical. Then every fixed-summary forecaster $f(p_{1:t})=g(\phi(p_{1:t}))$ has conditional KL excess risk (relative to the regime-aware Bayes predictor) at least
\[
\inf_{q\in\Delta^{D-1}}\sum_{z=1}^K\pi_z\,\mathrm{KL}(u_z\,\|\,q)
\;=\;\mathrm{JS}_\pi(u_1,\ldots,u_K)\;>\;0,
\]
where $\mathrm{JS}_\pi$ is the weighted Jensen--Shannon divergence.
\end{theorem}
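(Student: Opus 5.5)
The argument is a compensation identity for KL plus the observation that a fixed-summary forecaster outputs one vector on the whole event $\{\phi(p_{1:t})=\varphi\}$. Fix $\varphi$. Since $f=g\circ\phi$, the forecaster outputs a single point $q:=g(\varphi)\in\Delta^{D-1}$ on this event, whatever the latent regime. The regime-aware Bayes predictor outputs $u_{Z_t}$.

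We measure excess risk with the expected-KL loss $\mathbb E[\mathrm{KL}(p_{t+1}\|\hat p)\mid\varphi]$ used in training. Expanding $\mathrm{KL}(p\|q)=\sum_j p(j)\log p(j)-\sum_j p(j)\log q(j)$, the entropy term cancels between the two predictors. The cross-entropy term is linear in $p_{t+1}$. Conditioning on $(Z_t,\varphi)$ replaces $p_{t+1}$ by $u_z$, so the conditional excess risk becomes
\[
\sum_z\pi_z\sum_j u_z(j)\log\frac{u_z(j)}{q(j)}=\sum_z\pi_z\,\mathrm{KL}(u_z\|q).
\]
This identity is the first step. If instead the paper's excess risk is defined directly as $\sum_z\pi_z\mathrm{KL}(u_z\|q)$, the step is immediate.

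Since $q$ is arbitrary, we take the infimum over $\Delta^{D-1}$. To identify this infimum, set $\bar u=\sum_z\pi_z u_z$ and use the compensation identity
\[
\sum_z\pi_z\mathrm{KL}(u_z\|q)=\sum_z\pi_z\mathrm{KL}(u_z\|\bar u)+\mathrm{KL}(\bar u\|q).
\]
It follows by adding and subtracting $\log\bar u(j)$ and using $\sum_z\pi_z u_z=\bar u$. Gibbs' inequality then shows the infimum is attained uniquely at $q=\bar u$, with value $\mathrm{JS}_\pi(u_1,\dots,u_K)=\sum_z\pi_z\mathrm{KL}(u_z\|\bar u)$, which is the definition of weighted Jensen--Shannon.

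For strict positivity: each $\mathrm{KL}(u_z\|\bar u)\ge0$, with equality iff $u_z=\bar u$. If the sum vanished, then since all $\pi_z>0$ every $u_z$ would equal $\bar u$, contradicting non-identity.

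The main obstacle is bookkeeping with infinite values and supports. Points of $\Delta^{D-1}$ may have zero coordinates, so KL can be $+\infty$. Here I would invoke the stated relative-interior/$\Delta_\gamma$ smoothing assumption so that every $u_z$ and $\bar u$ is strictly positive and all terms are finite. Any $q$ that vanishes where $\bar u>0$ gives infinite risk and only helps the bound. A smaller point is measurability and the tower property for the conditional expectation given $\varphi$; this is routine once the summary event has positive probability, or is handled through regular conditional distributions otherwise.
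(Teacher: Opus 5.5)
Your proposal is correct and follows essentially the paper's proof: the same regime-conditional decomposition reduces the excess risk to $\sum_z\pi_z\,\mathrm{KL}(u_z\|q)$, the minimum is attained at the mixture $\bar u=\sum_z\pi_z u_z$, and strict positivity follows because the $u_z$ are not all identical and every $\pi_z>0$. Your compensation identity and Gibbs' inequality spell out the minimizer step that the paper only asserts, and your explicit use of $\pi_z>0$ fills in the positivity step the paper leaves implicit.
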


\noindent\textit{Proof sketch.} Any fixed-summary forecaster must output a single $q=g(\varphi)$ independent of $Z_t$. The standard regime-conditional KL decomposition makes the regime-aware Bayes risk cancel, leaving excess risk $\sum_z\pi_z\,\mathrm{KL}(u_z\|q)$; its minimizer is the mixture $\bar u=\sum_z\pi_z u_z$, and substitution gives $\mathrm{JS}_\pi(u_1,\ldots,u_K)$, which is zero iff all $u_z$ coincide. $\square$

\begin{theorem}[CAST oracle representability]
\label{thm:representability}
Assume that, conditional on $Z_t=z$, the Bayes successor has CAST form $u_z=(1-\rho_z)a_z+\rho_z T_z a_z$, $a_z=\lambda_z p_t+(1-\lambda_z)s_z$, with $s_z\in\Delta^{D-1}$, $\lambda_z\in[0,1]$, $\rho_z\in[0,\rho_{\max}]$ for some $\rho_{\max}\le 1$, and valid local stochastic $T_z$ (for ordered supports). Assume further that (i) there exists a causal representation $h_t=E(p_{1:t})$ identifying $Z_t$; (ii) for each $z$, the causal memory before time $t$ contains an anchor $s_z$, or anchors whose convex hull contains $s_z$; (iii) CAST gates and transport head can realize $(\lambda_z,\rho_z,T_z)$. Then there is a CAST parameterization (or a limit of finite-softmax parameterizations) such that $\hat p_{t+1}=u_{Z_t}$; the conditional KL excess risk of this oracle CAST predictor is zero, and vanishing in deterministic benchmarks.
\end{theorem}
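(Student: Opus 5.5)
The proof is constructive: we build the CAST parameterization stage by stage, reproducing the regime-$z$ quantities $(s_z,\lambda_z,\rho_z,T_z)$ exactly (or in a softmax limit), and then read off zero excess risk from the KL decomposition used for Theorem~\ref{thm:aliasing}.

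\emph{Representation.} By assumption (i), fix the encoder so that $h_t=E(p_{1:t})$ identifies $Z_t$. Concretely, there is a measurable decoder $z(h_t)=Z_t$, and the regime-indexed heads can be taken as functions of $h_t$.

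\emph{Retrieval.} Using assumption (ii), pick memory indices $S_z\subseteq\{s<t\}$ and convex weights $\beta_{z,s}\ge 0$ with $\sum_{s\in S_z}\beta_{z,s}p_{s+1}=s_z$.

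With finitely many heads, we need the attention vector $\alpha_{t,\cdot}^{(m)}$ to equal $\beta_{z,\cdot}$ when $Z_t=z$. The construction proceeds as follows.
\begin{itemize}
\item If $s_z$ is itself a stored successor, use a single-point anchor. Choose $W_q,W_k$ so that $q_t^\top k_s$ equals $c\,\mathbf 1\{s=s^\ast_z\}$ minus a constant. As $c\to\infty$, the softmax concentrates on $s^\ast_z$, giving $r_t\to s_z$.
\item For a genuine convex combination, assign the logits $c\cdot\mathbf 1\{s\in S_z\}+\log\beta_{z,s}$. Since these are finite scores, $W_q,W_k$ can realize them when the key features are rich enough to separate memory positions (again by (i)–(iii) on representational capacity).
\item Alternatively, use one head per vertex of $S_z$ and let the mixing weights $\eta_t=\mathrm{softmax}(W_\eta h_t)$ carry $\beta_{z,\cdot}$.
\end{itemize}

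This step is where the phrase ``limit of finite-softmax parameterizations'' enters. I expect it to be the main obstacle: the softmax never puts exactly zero mass outside $S_z$, so exact equality holds only in the limit.

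To pass to the limit, use continuity. The map $(r,\lambda,\rho,T)\mapsto(1-\rho)a+\rho Ta$ is continuous. Under relative-interior or $\gamma$-smoothed successors, $\mathrm{KL}(u\,\|\,\cdot)$ is continuous at $u$. Hence the excess risk tends to $0$ along the sequence, and its infimum is $0$.

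\emph{Gates and transport.} By assumption (iii), set $\lambda_t=\lambda_{z(h_t)}$, $\rho_t=\rho_{z(h_t)}$ and $K_t=K_{z(h_t)}$.

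Two boundary cases need care. If $\lambda_z\in\{0,1\}$ lies outside $[\lambda_{\min},\lambda_{\max}]$, it is again reached in the limit $\sigma(g)\to 0$ or $1$. For the mean-shift budget, choose $B_t\ge|\Delta\mu_t|$ so that $g_t=1$; in the limit $\epsilon\to 0$ this is part of what (iii) grants.

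Substituting into \eqref{eq:cast-full} gives $a_t=a_{Z_t}$ and $\hat p_{t+1}=u_{Z_t}$. For unordered supports, take $\rho_z=0$.

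\emph{Risk.} Conditional on $\varphi$ and $Z_t=z$, the regime-conditional decomposition from the proof of Theorem~\ref{thm:aliasing} states that the expected log-loss excess over the regime-aware Bayes predictor equals $\mathrm{KL}(u_z\,\|\,\hat p_{t+1})$. With $\hat p_{t+1}=u_z$ this is $0$ for every $z$, and averaging over $\pi_z$ gives $0$.

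In deterministic benchmarks, $p_{t+1}=u_{Z_t}$ almost surely. The loss $\mathrm{KL}(p_{t+1}\,\|\,\hat p_{t+1})$ therefore vanishes exactly, or tends to $0$ along the limiting sequence.
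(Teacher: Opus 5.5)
Your proposal is correct and follows the paper's own constructive argument: use the regime-identifying $h_t$ to concentrate retrieval mass on the anchor (exactly or as a finite-softmax limit), set $(\lambda_t,\rho_t,T_t)$ via (iii), substitute into \eqref{eq:cast-full} to get $u_{Z_t}$, and read off zero excess risk from the decomposition used for Theorem~\ref{thm:aliasing}; your convex-hull logit construction and continuity argument for the limit are more careful than the paper's one-paragraph version. Two asides overreach but do not affect the conclusion: (i)--(iii) do not literally imply that the memory keys $W_k h_s$ separate the anchor entries from the rest of memory (the paper assumes this just as tacitly), and with $\lambda_{\min}>0$ the limit $\sigma(g)\to 0$ gives $\lambda_t\to\lambda_{\min}$ rather than $0$ (similarly for $\lambda_{\max}<1$), so values of $\lambda_z$ outside $[\lambda_{\min},\lambda_{\max}]$ are not reached by any gate limit and must be supplied directly by assumption (iii).
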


\noindent\textit{Proof sketch.} Because $h_t$ identifies $Z_t$, retrieval heads can concentrate their mass (exactly, or as a finite-softmax limit) on the memory anchor $s_{Z_t}$; the anchor and transport gates then realize $(\lambda_{Z_t},\rho_{Z_t},T_{Z_t})$; substitution into \eqref{eq:cast-full} recovers $u_{Z_t}$. $\square$

\begin{theorem}[Ordered-support transport separation]
\label{thm:transport}
Assume the support is ordered, $Z_t$ is identified, and the no-transport anchor class $\mathcal A_z^0=\{\lambda p_t+(1-\lambda)r:\lambda\in[0,1],\,r\in\mathcal R_z^0\}$ has $L^1$ gap $\delta_z=\inf_{q\in\mathcal A_z^0}\|u_z-q\|_1>0$ for each $z$. Then even an oracle no-transport method that knows $Z_t$ incurs conditional KL excess risk
\[
\inf_{q_z\in\mathcal A_z^0}\sum_z\pi_z\,\mathrm{KL}(u_z\,\|\,q_z)\;\ge\;\tfrac12\sum_z\pi_z\delta_z^2,
\]
while CAST attains zero oracle excess risk whenever $u_z$ lies in the CAST local-transport class (Theorem~\ref{thm:representability}).
\end{theorem}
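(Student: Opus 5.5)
The plan has two halves, matching the two claims of Theorem~\ref{thm:transport}: a lower bound for the oracle no-transport class, and an attainability statement for CAST that is inherited from Theorem~\ref{thm:representability}. For the lower bound, I would first use the regime-conditional KL decomposition from the proof of Theorem~\ref{thm:aliasing}. Conditional on $\varphi$ and $Z_t=z$, a predictor $q_z$ that is measurable with respect to the regime-aware information has expected log-loss minus Bayes log-loss equal to $\mathrm{KL}(u_z\,\|\,q_z)$. This uses the standard cancellation: the cross term $\mathbb E[\sum_j p_{t+1}(j)\log(u_z(j)/q_z(j))\mid Z_t=z,\varphi]$ depends on $p_{t+1}$ only linearly, so it collapses to $\sum_j u_z(j)\log(u_z(j)/q_z(j))$. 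Averaging over $z$ with weights $\pi_z$ then gives the total oracle excess risk $\sum_z\pi_z\,\mathrm{KL}(u_z\,\|\,q_z)$. Because the oracle knows $Z_t$, the choices $q_z\in\mathcal A_z^0$ decouple across $z$, and the infimum of the sum equals the sum of the per-regime infima.

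Next I would apply Pinsker's inequality in the form $\mathrm{KL}(u\,\|\,q)\ge\tfrac12\|u-q\|_1^2$, with $\mathrm{KL}$ in nats and $\|\cdot\|_1$ the $L^1$ norm. For each $z$ and each $q_z\in\mathcal A_z^0$, the gap assumption gives $\|u_z-q_z\|_1\ge\delta_z$. Hence $\mathrm{KL}(u_z\,\|\,q_z)\ge\tfrac12\delta_z^2$ uniformly over the class, so $\inf_{q_z\in\mathcal A_z^0}\mathrm{KL}(u_z\,\|\,q_z)\ge\tfrac12\delta_z^2$. Summing with the weights $\pi_z$ yields the displayed bound $\tfrac12\sum_z\pi_z\delta_z^2$.

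A minor point is that the KL must be well defined. Under the paper's convention of relative-interior successors or $\Delta_\gamma^{D-1}$ smoothing, every quantity is finite. If some $q_z$ assigns zero mass where $u_z$ does not, the KL is $+\infty$ and the bound holds trivially. I also need to note that the bound is in terms of an infimum that may not be attained, but Pinsker applies pointwise, so attainment is never required.

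For the CAST half, I would invoke Theorem~\ref{thm:representability} directly. Assumption (i) of that theorem holds because $Z_t$ is identified by hypothesis. The hypothesis that $u_z$ lies in the local-transport class supplies $u_z=(1-\rho_z)a_z+\rho_zT_za_z$ together with the memory and realizability conditions (ii) and (iii). That theorem then gives $\hat p_{t+1}=u_{Z_t}$, possibly as a finite-softmax limit, and therefore zero excess risk. In the limit case I would argue that KL is continuous on the smoothed simplex, so the excess risk converges to zero and the infimum over CAST parameterizations equals zero.

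The main obstacle is the interpretive gap in the theorem's phrasing, not any computation. The bound is meaningful only if $\delta_z>0$ while $u_z$ is still CAST-representable, so I would add a remark showing these are compatible. Take $\mathcal R_z^0=\{s_z\}$ and let $u_z$ be a nontrivial radius-1 shift of $a_z$, for instance with $\rho_z>0$ and $T_z\neq I$. Then $u_z$ is generically not of the form $\lambda p_t+(1-\lambda)s_z$ for any $\lambda\in[0,1]$, since that set is a compact segment. Compactness of $\mathcal A_z^0$ (whenever $\mathcal R_z^0$ is compact) turns "$u_z\notin\mathcal A_z^0$" into $\delta_z>0$.
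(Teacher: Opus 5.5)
Your proposal is correct and matches the paper's argument. For every $q_z\in\mathcal A_z^0$ the gap gives $\|u_z-q_z\|_1\ge\delta_z$, Pinsker (in nats) gives $\mathrm{KL}(u_z\|q_z)\ge\tfrac12\delta_z^2$, and $\pi_z$-averaging yields the bound, while the CAST half is inherited directly from Theorem~\ref{thm:representability}. Your extras are sound and only make explicit what the paper leaves implicit: re-deriving the excess-risk form from the regime-conditional KL decomposition, the infinite-KL and non-attainment remarks, and the compactness argument showing $\delta_z>0$ is compatible with CAST-representability.
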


\noindent\textit{Proof sketch.} For each $z$, $\|u_z-q_z\|_1\ge\delta_z$ by construction; Pinsker's inequality gives $\mathrm{KL}(u_z\|q_z)\ge\frac12\delta_z^2$; averaging over $z$ yields the bound. The boundary case $\delta_z=0$ (successor already in the hull) is exactly when the theorem gives no separation, matching the intended reading that transport is only a formal advantage when local motion is not already representable as a no-transport mixture. $\square$

\paragraph{Supporting results and scope.} Appendix~\ref{app:theory-details} supplements these theorems with: an approximation bound $\|\tilde u-u\|_1\le \epsilon_r+2\epsilon_\lambda+\rho_{\max}\epsilon_T+2\epsilon_\rho$ for imperfect CAST estimates (with a $1/\gamma$-Lipschitz KL counterpart on $\Delta_\gamma^{D-1}$); a retrieval-consistency statement that the nearest-memory retrieval error at distance $d_t$ is at most $Ld_t+\|\xi\|_1$ under an $L$-Lipschitz successor map, so a dense causal memory makes the anchor converge to the Bayes successor; simplex-closure and $W_1$ drift control used in Section~\ref{sec:method}; and a queueing corollary specializing to $u_{\mathrm{up/down}}=(1-\rho)p^\star+\rho T_{\mathrm{right/left}}p^\star$, where a phase-blind baseline pays $\mathrm{JS}_{1/2}(u_{\mathrm{up}},u_{\mathrm{down}})$ and no-transport anchor methods pay an additional $\tfrac12\delta^2$---tested below on a controlled synthetic.

\section{Experimental Setup}
\label{sec:experiments}

\paragraph{Task and evaluation modes.} Each method is trained to predict $\hat p_{t+1}=f(p_{1:t})$ and evaluated under the same online information structure. We report two modes. Offline one-step: teacher-forced prediction on held-out chronological target positions; this isolates the quality of the learned one-step conditional map. Autoregressive rollout: from an observed context the model repeatedly feeds its own prediction back into its history, measuring long-horizon drift and stability of the learned transition operator.

\paragraph{Benchmark suite.} The suite contains eleven distribution-valued sections: eight public composition panels---BioTIME (ecological composition) \citep{dornelas2025biotime}, Ember Monthly (electricity generation mix) \citep{ember2026monthlyelectricity}, OWID Dietary (diet composition) \citep{owid2023dietcompositions}, CDC Weekly Deaths (cause-of-death share) \citep{cdc_nchs2023weeklydeaths}, BLS QCEW (industry employment share) \citep{bls2026qcew}, EPA AirData AQI (AQI category share) \citep{epa2026airdata}, NOAA Storm Events (event-type share) \citep{noaa_ncei_stormevents}, NYC TLC Trip (pickup-zone share) \citep{nyctlc_trip_records}---and three queue-occupancy sections (Homogeneous, Nonhomogeneous, Combined) built from $G/G/1$ and $G_t/G/1$ simulations following the generation protocol of \citet{di2025transformerqueue}. The suite is intentionally heterogeneous: unordered composition panels test whether a method can exploit causal successor information without coordinate order, while ordered datasets (EPA AQI and all queue sections) additionally test whether local mass motion helps. Appendix~\ref{app:benchmarks} gives full data sources, licenses, preprocessing, and per-section split statistics; Table~\ref{tab:main-dataset-summary-short} summarizes the sections.

\begin{table}[t]
\centering
\caption{Benchmark suite summary. Ordered = the support has a natural order relevant to local transport ($\rho_t\not\equiv 0$). Queue sections follow the $G/G/1$ and $G_t/G/1$ generation protocol of \citet{di2025transformerqueue}.}
\label{tab:main-dataset-summary-short}
\scriptsize
\setlength{\tabcolsep}{3.0pt}
\begin{tabular}{@{}lllrrrc@{}}
\toprule
Section & Type & Sequence unit & \#Seq. & $D$ & Length (min / med / max) & Ordered \\
\midrule
BioTIME & ecological composition & study-site annual & 646 & 128 & 15 / 25 / 57 & no \\
Ember Monthly & energy generation share & country monthly & 87 & 9 & 40 / 109 / 327 & no \\
OWID Dietary & diet composition & country annual & 135 & 25 & 63 / 63 / 63 & no \\
CDC Weekly Deaths & mortality cause share & state weekly & 50 & 6 & 190 / 190 / 190 & no \\
BLS QCEW & industry employment share & state annual & 53 & 13 & 15 / 15 / 15 & no \\
EPA AirData AQI & AQI category share & county monthly & 800 & 6 & 60 / 191 / 300 & \textbf{yes} \\
NOAA Storm Events & event-type share & state monthly & 49 & 32 & 61 / 244 / 300 & no \\
NYC TLC Trip & pickup-zone share & pickup-hour weekly & 24 & 266 & 52 / 53 / 53 & no \\
Queue Homogeneous & $G/G/1$ occupancy & queue trajectory & 10{,}000 & 596 & 101 / 549 / 16621 & \textbf{yes} \\
Queue Nonhomogeneous & $G_t/G/1$ occupancy & queue trajectory & 9{,}900 & 562 & 27 / 547 / 1758 & \textbf{yes} \\
Queue Combined & pooled $G/G/1 \cup G_t/G/1$ & queue trajectory & 19{,}900 & 596 & 27 / 548 / 16621 & \textbf{yes} \\
\bottomrule
\end{tabular}
\end{table}

\paragraph{Baselines.} We compare CAST with 15 baselines that all produce simplex-valued forecasts under the same loss masks. Statistical/compositional: \texttt{persistence} ($\hat p_{t+1}=p_t$), \texttt{analog\_successor} (train-split memory bank of context-successor windows with weighted nearest-neighbor retrieval), \texttt{ilr\_var} (VAR in isometric log-ratio coordinates), \texttt{compositional\_ets} (ilr-space ETS/state-space). Neural: \texttt{transformer\_naive} (decoder-only Transformer with simplex head), \texttt{gru}, \texttt{lstm}, \texttt{tcn}, \texttt{nhits} \citep{challu2023nhits}, \texttt{dlinear} \citep{zeng2023transformers}, \texttt{timemixer} \citep{wang2024timemixer}, \texttt{itransformer} \citep{liu2024itransformer}, \texttt{informer} \citep{zhou2021informer}, \texttt{autoformer} \citep{wu2021autoformer}, and \texttt{tide} \citep{das2023tide}. Each neural backbone is adapted to distribution-valued inputs and capped with a final simplex-valued head; all trainable baselines use one-step KL loss on the same masked adjacent pairs as CAST.

\paragraph{Splits and protocol.} All splits are chronological. For the eight public panels, each split file stores the same probability-vector sequences with disjoint loss masks over scored one-step targets; because the models are causal, held-out target scoring is teacher-forced online forecasting, not a random shuffle. Per-dataset split policies and target counts are in Appendix~\ref{app:experimental-details}. The queue benchmark uses a file-level 70/10/20 split by whole queue system, stratified by support-width deciles, so queue evaluation measures generalization to unseen simulated systems rather than interpolation between overlapping windows from seen systems. Rollout contexts/horizons range from 8/2 (BLS QCEW) to 128/64 (queue sections); see Appendix~\ref{app:experimental-details}.

\paragraph{Training.} All trainable models optimize one-step KL with AdamW, learning rate $3\!\times\!10^{-4}$, 200-step warmup, weight decay 0.1, gradient clipping 1.0, 10k iterations (12k for queues), hidden dimension 256, 6 layers, 4 heads, dropout 0.1, batch size 16 (4 for queues). Models are selected by the lowest validation one-step KL loss. No rollout loss, multi-horizon target loss, target-moment loss, oracle gate supervision, or validation/test retrieval memory is used for CAST. Fit-only baselines (\texttt{persistence}, \texttt{analog\_successor}, \texttt{ilr\_var}, \texttt{compositional\_ets}) are fitted on the training split without SGD. Full hyperparameters, dataset-specific block sizes, CAST configuration, and compute (A100/H200/L40S two-GPU nodes, 16 cores, $\ge$128\,GB host memory) are in Appendix~\ref{app:experimental-details}.

\paragraph{Metrics and reporting.} Lower is better for all metrics. We report one-step KL, JSD, $L^1$, and ordered-support $W_1$; the primary one-step metric is offline KL and the primary long-horizon metric is rollout JSD. The main aggregate comparison is average rank over the 11 sections; top-1 wins are reported as an auxiliary summary because ranks retain how close a method remains when it is not the winner. The main tables report seed 42 as a broad common-protocol comparison across 16 methods and 11 datasets; we additionally run a targeted five-seed CAST robustness study on the public non-queue sections (Appendix~\ref{app:cast-seed-robustness}).

\section{Results}
\label{sec:results}

\begin{table}[t]
\centering
\caption{Average ranks across the 11 benchmark sections (lower is better). All 16 methods are complete on all 11 sections; ranks are directly comparable. CAST is the only method with sub-2 average rank on either metric.}
\label{tab:main-average-ranks}
\scriptsize
\setlength{\tabcolsep}{3.2pt}
\begin{tabular}{@{}l rrrr@{\hspace{1.0em}}l rrrr@{}}
\toprule
Method & KL$_\text{avg}$ & RO$_\text{avg}$ & Mean & Best & Method & KL$_\text{avg}$ & RO$_\text{avg}$ & Mean & Best \\
\midrule
\textbf{CAST}       & \textbf{1.27} & \textbf{1.91} & \textbf{1.59} & 1 / 1 & iTransformer & 8.73  & 9.73  & 9.23  & 5 / 7 \\
Comp.\ ETS          & 5.27          & 3.91          & 4.59          & 2 / 1 & TCN          & 8.64  & 11.27 & 9.95  & 3 / 7 \\
Persistence         & 5.64          & 3.73          & 4.68          & 2 / 1 & TimeMixer    & 9.00  & 11.36 & 10.18 & 3 / 5 \\
N-HiTS              & 5.27          & 5.55          & 5.41          & 1 / 3 & TiDE         & 10.82 & 10.18 & 10.50 & 8 / 7 \\
Informer            & 6.45          & 7.18          & 6.82          & 4 / 3 & LSTM         & 12.27 & 9.45  & 10.86 & 7 / 2 \\
Transformer         & 9.00          & 6.64          & 7.82          & 3 / 2 & Analog succ.\ & 12.00 & 11.18 & 11.59 & 5 / 4 \\
ilr-VAR             & 7.45          & 8.64          & 8.05          & 1 / 2 & Autoformer   & 12.55 & 13.64 & 13.09 & 6 / 6 \\
GRU                 & 8.55          & 8.27          & 8.41          & 3 / 3 & DLinear      & 13.09 & 13.36 & 13.23 & 6 / 11 \\
\bottomrule
\end{tabular}
\end{table}

\paragraph{Aggregate ranks.} Table~\ref{tab:main-average-ranks} reports average rank across 11 sections. CAST is first on both offline KL (1.27) and rollout JSD (1.91), with mean rank 1.59 vs.\ 4.59 for the closest non-CAST competitor (Comp.\ ETS) and 5.41 for the closest neural backbone (N-HiTS). CAST is top-1 on 8/11 sections for both metrics (Table~\ref{tab:main-dataset-summary}) and top-2 on all 11 for offline KL. The three non-winning offline sections (EPA AirData AQI, NOAA Storm Events, Queue Homogeneous) are all sections where CAST wins rollout JSD, so one-step losses do not propagate to long-horizon stability. Strong classical baselines (persistence, Comp.\ ETS) are rollout-competitive on smooth panels but collapse on Queue Nonhomogeneous/Combined---the regime predicted by Theorems~\ref{thm:aliasing}--\ref{thm:transport}.

\begin{table}[t]
\centering
\caption{Dataset-level comparison: CAST vs.\ strongest non-CAST baseline on each metric (lower is better; bold = winner). CAST is top-1 on 8/11 sections for both metrics and top-2 on all 11 sections for offline KL.}
\label{tab:main-dataset-summary}
\scriptsize
\setlength{\tabcolsep}{2.4pt}
\resizebox{\textwidth}{!}{%
\begin{tabular}{@{}l rl rl@{}}
\toprule
& \multicolumn{2}{c}{Offline KL $\downarrow$} & \multicolumn{2}{c}{Rollout JSD $\downarrow$} \\
\cmidrule(lr){2-3}\cmidrule(l){4-5}
Dataset & CAST vs.\ strongest non-CAST & Winner & CAST vs.\ strongest non-CAST & Winner \\
\midrule
BioTIME              & \textbf{0.1313} \,/\, 0.1844 N-HiTS        & \textbf{CAST}  & 0.03232 \,/\, \textbf{0.02206} Comp.\ ETS   & Comp.\ ETS \\
Ember Monthly        & \textbf{0.04342} \,/\, 0.05583 Persist.    & \textbf{CAST}  & \textbf{0.007129} \,/\, 0.01146 ilr-VAR     & \textbf{CAST} \\
OWID Dietary         & \textbf{0.009096} \,/\, 0.009747 Comp.\ ETS & \textbf{CAST}  & 0.005478 \,/\, \textbf{0.005441} Persist.    & Persist.\ \\
CDC Weekly Deaths    & \textbf{0.01079} \,/\, 0.01423 N-HiTS      & \textbf{CAST}  & 0.01743 \,/\, \textbf{0.003676} Persist.    & Persist.\ \\
BLS QCEW             & \textbf{0.0008773} \,/\, 0.001242 Persist. & \textbf{CAST}  & \textbf{0.0002459} \,/\, 0.0002461 Persist. & \textbf{CAST} \\
EPA AirData AQI      & 0.1184 \,/\, \textbf{0.09518} N-HiTS       & N-HiTS          & \textbf{0.03749} \,/\, 0.04465 Transf.\      & \textbf{CAST} \\
NOAA Storm Events    & 1.004 \,/\, \textbf{0.805} N-HiTS          & N-HiTS          & \textbf{0.2042} \,/\, 0.2161 LSTM            & \textbf{CAST} \\
NYC TLC Trip         & \textbf{0.02168} \,/\, 0.02742 N-HiTS       & \textbf{CAST}  & \textbf{0.009625} \,/\, 0.01169 Comp.\ ETS   & \textbf{CAST} \\
Queue Homogeneous    & 0.01576 \,/\, \textbf{0.01466} ilr-VAR     & ilr-VAR         & \textbf{0.07323} \,/\, 0.0887 Persist.\      & \textbf{CAST} \\
Queue Nonhomogeneous & \textbf{0.07101} \,/\, 0.07959 Persist.     & \textbf{CAST}  & \textbf{0.03857} \,/\, 0.05524 Persist.\     & \textbf{CAST} \\
Queue Combined       & \textbf{0.007519} \,/\, 0.0103 Persist.     & \textbf{CAST}  & \textbf{0.081} \,/\, 0.08827 Comp.\ ETS      & \textbf{CAST} \\
\midrule
\textbf{CAST top-1} & \multicolumn{2}{c}{\textbf{8/11 sections}} & \multicolumn{2}{c}{\textbf{8/11 sections}} \\
\bottomrule
\end{tabular}%
}
\end{table}

\begin{figure}[t]
    \centering
    \includegraphics[width=0.95\linewidth]{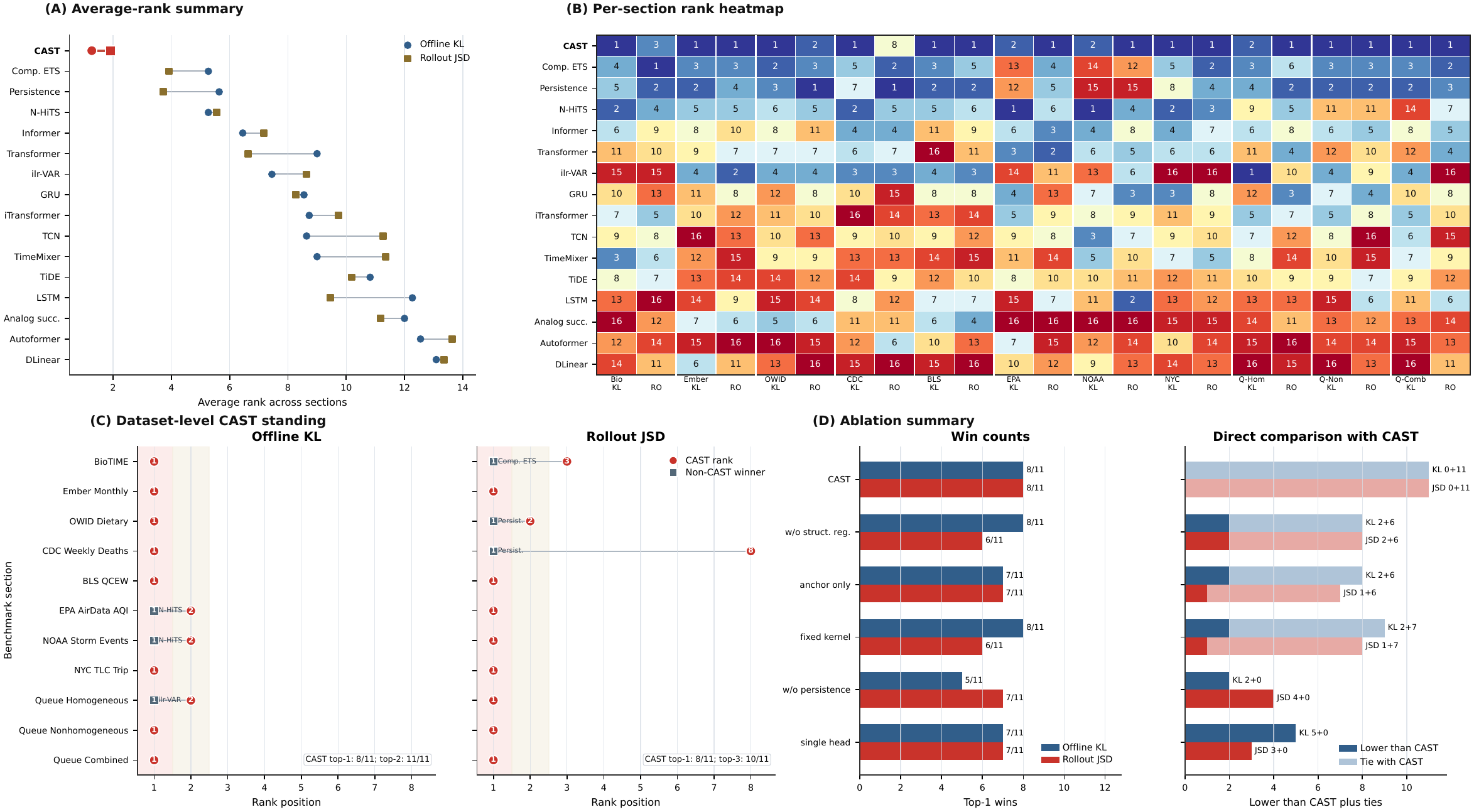}
    \caption{\textbf{Main empirical results and component ablations.}
    \textbf{(A)} Average offline KL and rollout JSD ranks across 11 sections for all 16 methods (lower is better).
    \textbf{(B)} Per-dataset rank heatmap for offline KL and rollout JSD; cooler = stronger rank.
    \textbf{(C)} Dataset-level standing vs.\ the best non-CAST baseline: red circles = CAST's rank, gray squares = non-CAST winner when CAST is not top, horizontal segments = the rank gap.
    \textbf{(D)} CAST component study: top-1 wins against the best non-CAST baseline and direct section-level comparisons with CAST.}
    \label{fig:main-results}
\end{figure}

The main sweep is single-seed by design; a five-seed CAST robustness study on the non-queue sections (Appendix~\ref{app:cast-seed-robustness}) keeps the CAST interval separated from the immediately worse baseline in almost all CAST-winning cases (the only unstable exception is OWID Dietary, where leading methods have very small absolute gaps).

\subsection{Component ablation}
\label{sec:results-ablation}

\begin{table}[t]
\centering
\caption{Component and capacity-control summary. Win counts are each CAST variant vs.\ the strongest non-CAST baseline per section (so 8/11 matches full CAST). Median $\Delta$ is variant/CAST$-1$; positive means higher error.}
\label{tab:ablation-summary}
\scriptsize
\setlength{\tabcolsep}{3.0pt}
\begin{tabular}{@{}llrrrrrr@{}}
\toprule
Variant & Study & KL wins & RO wins & Med.\ KL $\Delta$ & Med.\ RO $\Delta$ & KL$<$CAST & RO$<$CAST \\
\midrule
\textbf{CAST} & Full model & \textbf{8/11} & \textbf{8/11} & -- & -- & -- & -- \\
w/o structural reg. & Strict ablation & 8/11 & 6/11 & +0.0\% & +0.0\% & 2/11 & 2/11 \\
anchor only & Strict ablation & 7/11 & 7/11 & +0.0\% & +0.0\% & 2/11 & 1/11 \\
single-head retrieval & Capacity control & 7/11 & 7/11 & +0.5\% & +2.4\% & 5/11 & 3/11 \\
fixed local kernel & Strict ablation & 8/11 & 6/11 & +0.0\% & +0.0\% & 2/11 & 1/11 \\
w/o persistence mix & Strict ablation & 5/11 & 7/11 & +14.5\% & +0.8\% & 2/11 & 4/11 \\
\bottomrule
\end{tabular}
\end{table}

Table~\ref{tab:ablation-summary} (visualized in Figure~\ref{fig:main-results}D) compares each variant against the strongest non-CAST baseline per section. Removing structural regularization keeps offline KL wins (8/11) but drops rollout wins from 8/11 to 6/11---the target-free operator prior is a rollout-stability mechanism. Removing the persistence mix ($\lambda_t\!\equiv\!0$) is the most damaging change: KL wins drop from 8/11 to 5/11 ($+14.5\%$ median $\Delta$), with Ember Monthly and Queue Combined rising by orders of magnitude. A fixed local kernel keeps offline KL wins but degrades rollout where ordered transport matters (Queue Nonhomogeneous rollout JSD $+223\%$). Single-head retrieval is occasionally stronger than CAST on individual sections but does not improve the 11-section summary. Full per-section numbers are in Appendix~\ref{app:ablation-full}.

\subsection{Synthetic validation of the aliasing theory}
\label{sec:results-synthetic}

\begin{table}[t]
\centering
\caption{Synthetic aliasing experiment ($K=2$ equal-probability regimes, identical $p^\star$, opposite radius-1 local transports). Fixed-summary forecasters must collapse to the JS-average; phase-aware anchor-only cannot leave the no-transport hull; CAST closes the gap.}
\label{tab:synthetic-aliasing-experiment}
\small
\setlength{\tabcolsep}{3.2pt}
\resizebox{\linewidth}{!}{%
\begin{tabular}{@{}lccc@{}}
\toprule
Method & KL $\downarrow$ & JSD $\downarrow$ & $L^1$ $\downarrow$ \\
\midrule
Fixed-summary optimum (Bayes mixture average) & 0.044893 & 0.011562 & 0.238730 \\
Trained current-only neural forecaster & $0.044899\pm 3.3\mathrm{e}{-6}$ & $0.011564\pm 3.2\mathrm{e}{-6}$ & $0.238774\pm 2.9\mathrm{e}{-5}$ \\
Phase-aware anchor only (no transport) & 0.047840 & 0.011722 & 0.246619 \\
CAST oracle local transport & \textbf{0} & \textbf{0} & \textbf{0} \\
Trained CAST local-transport head & $\mathbf{1.7\mathrm{e}{-8}\pm 5.0\mathrm{e}{-8}}$ & $\mathbf{0\pm 6.1\mathrm{e}{-8}}$ & $\mathbf{2.3\mathrm{e}{-4}\pm 1.7\mathrm{e}{-5}}$ \\
\bottomrule
\end{tabular}%
}
\end{table}

We implement the exact setup of the queueing corollary (Section~\ref{sec:theory}): two equal-probability latent phases, identical current simplex state $p^\star$, and opposite radius-1 local transports $u_{\mathrm{up}}=(1-\rho)p^\star+\rho T_{\mathrm{right}}p^\star$, $u_{\mathrm{down}}=(1-\rho)p^\star+\rho T_{\mathrm{left}}p^\star$. Table~\ref{tab:synthetic-aliasing-experiment} shows the predicted gap cleanly: a fixed-summary forecaster and a trained current-only neural forecaster both sit almost exactly on the Bayes mixture average (KL $\approx 0.04489$), the phase-aware anchor-only comparator is worse than the fixed-summary optimum because it cannot leave the no-transport hull (KL $0.04784$), and both the CAST oracle and a trained CAST local-transport head close the gap to essentially zero (trained CAST KL $\approx 1.7\!\times\!10^{-8}$). This is not a parameter-tuning artefact: the fixed-summary optimum is provably $\mathrm{JS}_{1/2}(u_{\mathrm{up}},u_{\mathrm{down}})$ by Theorem~\ref{thm:aliasing}, the anchor-only gap is provably $\tfrac12\delta^2$ by Theorem~\ref{thm:transport}, and the CAST success is Theorem~\ref{thm:representability}.

\subsection{Qualitative visualization analysis}
\label{sec:results-visualization}

\begin{figure}[t]
    \centering
    \includegraphics[width=0.88\linewidth]{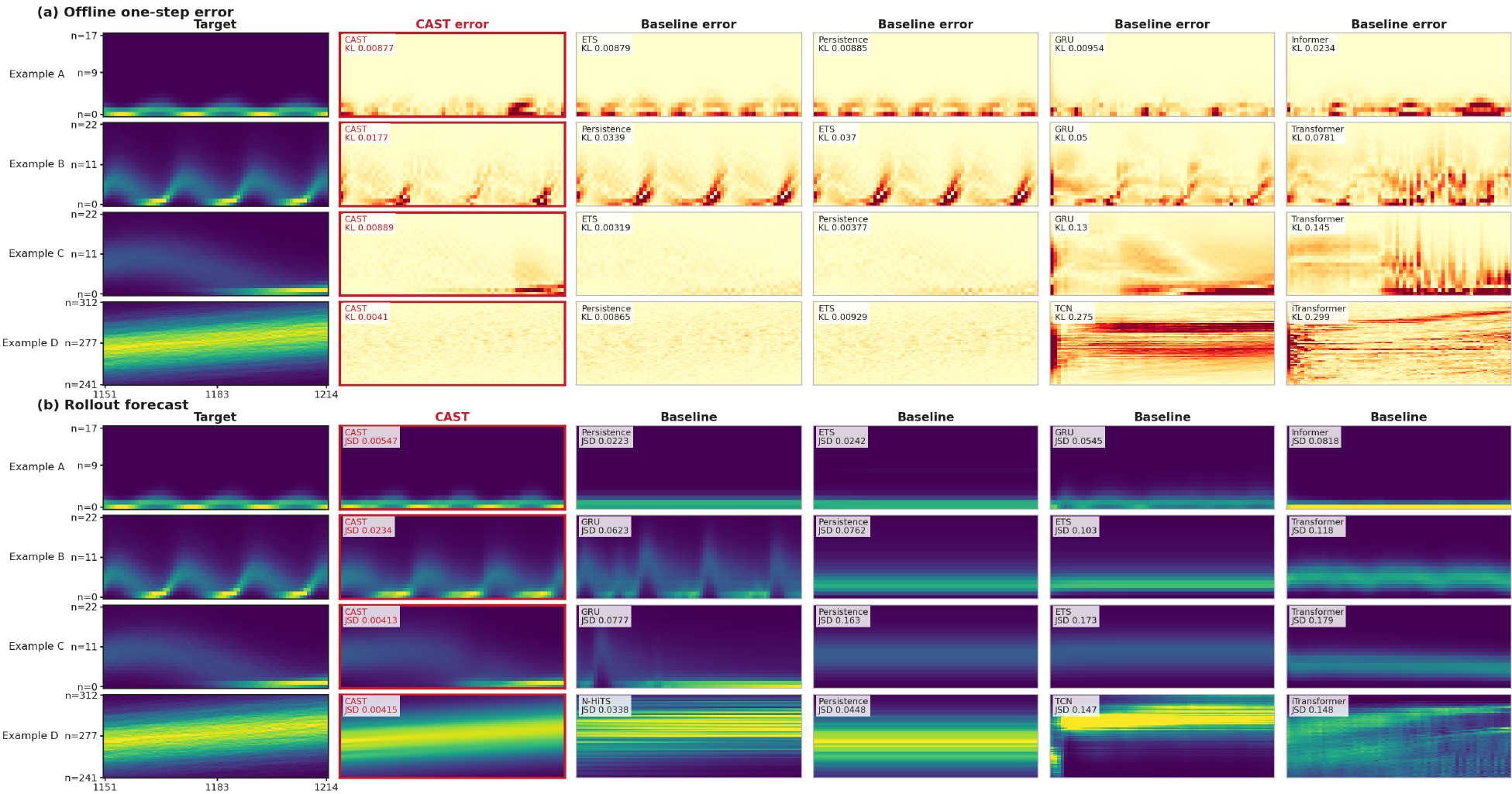}
    \caption{\textbf{Qualitative offline and rollout on held-out queueing systems.}
    \textbf{(a)} Offline one-step residual heat maps (annotated by mean KL): CAST's errors are diffuse and low-magnitude; baselines show structured errors aligned with target ridges.
    \textbf{(b)} Autoregressive rollout (annotated by mean JSD): CAST preserves the moving occupancy mass and changing support shape; persistence and Comp.\ ETS drift toward static bands, N-HiTS over-smooths, and Transformer produces off-support artefacts. CAST's rollout advantage is driven by preserving ordered local mass motion under its own past predictions---the mechanism the bounded transport + shift-budget design enables.}
    \label{fig:main-offline-rollout-examples}
\end{figure}

Figure~\ref{fig:main-offline-rollout-examples} complements the aggregate numbers on held-out queues. In offline residuals (panel a) CAST's error is visually structureless (small amplitude, no ridge alignment) while persistence/Comp.\ ETS show diagonal stripes (they lag moving mass) and N-HiTS/Transformer show patchy high-error regions (misplaced modes). Autoregressive rollouts (panel b) turn these one-step differences into qualitative divergence: persistence freezes into a static band, Comp.\ ETS and N-HiTS smear the mass into a plateau, Transformer develops off-support ghost modes, and CAST continues to advance the occupancy mode and track the narrowing of the distribution---consistent with a bounded-transport operator acting on a persistence--retrieval anchor. CAST's advantage is thus not a small numerical edge on one-step KL but a structural advantage in how predicted probability mass evolves under the model's own past forecasts.

\paragraph{Where aliasing matters most.} The nearest-neighbor diagnostic of Appendix~\ref{app:theory-diagnostics} isolates strong aliasing in EPA AirData AQI, NOAA Storm Events, and Queue Nonhomogeneous/Combined---precisely where CAST's rollout advantage is largest (rank 1 on all four, $16$--$30\%$ rollout-JSD gaps on the two queue sections). Queue Homogeneous is the opposite case: CAST's advantage there reflects rollout stability and ordered local transport rather than aliasing disambiguation (loses offline KL to ilr-VAR but wins rollout JSD by $17\%$).

\section{Conclusion}
\label{sec:conclusion}

CAST structures the transition operator for distribution-valued time series as a causal successor-retrieval + persistence-anchor + bounded-local-transport composition, keeping every stage inside $\Delta^{D-1}$. The theory gives a weighted Jensen--Shannon excess-risk lower bound for fixed-summary forecasters under latent-kernel aliasing and places the regime-aware Bayes successor inside CAST's hypothesis class, with an added Pinsker separation for ordered supports. CAST attains the best average rank on both one-step KL and rollout JSD (8/11 wins each) across eleven benchmarks; ablations, seed robustness, a synthetic aliasing experiment, and queueing rollout visualizations show that its advantage comes from preserving ordered local mass motion under its own past predictions.

\section{Limitations}
\label{sec:limitations}

CAST targets recurring transition regimes with optional ordered mass motion; fully unordered supports with purely smooth dynamics may favor a different model. The main sweep is single-seed (complemented by a five-seed CAST robustness check in Appendix~\ref{app:cast-seed-robustness}), and the aliasing theory is an existence argument, not a finite-sample training guarantee. Training uses a one-step KL objective with rollout as a pure evaluation protocol; long-horizon training, longer-range retrieval, multi-radius transport, and integration with risk-sensitive decision making are natural extensions.

{
\small
\bibliographystyle{unsrtnat}
\bibliography{references}
}

\newpage
\appendix
\section{Theory: Full Proofs and Supporting Results}
\label{app:theory-details}

We give the full proofs of the three main theorems stated in Section~\ref{sec:theory}, together with approximation, retrieval-consistency, and simplex-closure supplements that are summarized in the main text.

\subsection{Setup recap}
\label{app:theory-setup}

Let $p_t\in\Delta^{D-1}=\{p\in\mathbb R_{\ge 0}^D:\sum_j p(j)=1\}$; coordinates may be unordered categorical or ordered support states. Let $\mathcal F_t=\sigma(p_{1:t})$ be the observed causal history and $\mathcal G_t^\phi=\sigma(\phi(p_{1:t}))\subseteq\mathcal F_t$ the summary available to a baseline. The central hard case is latent-kernel aliasing: two or more histories have the same observed summary but different conditional successors, indexed by $Z_t\in\{1,\ldots,K\}$. For an aliasing event $\phi(p_{1:t})=\varphi$, let $\pi_z=\mathbb P(Z_t\!=\!z\mid\varphi)$ and $u_z=\mathbb E[p_{t+1}\mid Z_t\!=\!z,\varphi]$. In stochastic settings the total expected KL risk has an irreducible within-regime term, so the aliasing lower bound is stated as an excess-risk term over the regime-aware Bayes predictor; in deterministic one-step benchmarks that term is zero. For clean KL statements we either assume interior-simplex successors or apply the same small-probability smoothing used in the empirical KL evaluation; when needed we write $\Delta_\gamma^{D-1}=\{p:p(j)\ge\gamma\}$.

The CAST hypothesis class, for ordered supports with radius-$r$ transport, is $\hat p=(1-\rho)a+\rho T a$ with $a=\lambda p+(1-\lambda)r$, $r\in\mathcal R$ retrieved from the causal memory, $\lambda\in[0,1]$, $\rho\in[0,\rho_{\max}]$ for some $\rho_{\max}\le 1$, and row-stochastic local $T$ (identity for unordered supports, where we set $\rho=0$). The no-transport anchor class is $\mathcal A_t=\{\lambda p_t+(1-\lambda)r:\lambda\in[0,1],\,r\in\mathcal R_t\}$; an ordered-transport separation is only meaningful when the no-transport hull does not already contain the transported successor.

\subsection{Theorem~\ref{thm:aliasing} (aliasing lower bound): restatement and proof}

\begin{restatement}[Theorem~\ref{thm:aliasing}: latent-kernel aliasing lower bound]
Fix an aliasing summary value $\varphi$. Suppose $\mathbb P(Z_t=z\mid\phi(p_{1:t})=\varphi)=\pi_z>0$ with $\sum_z\pi_z=1$ and the regime-specific Bayes successors $u_1,\ldots,u_K$ are not all identical. Then every fixed-summary forecaster $f(p_{1:t})=g(\phi(p_{1:t}))$ has conditional KL excess risk, relative to the regime-aware Bayes predictor, at least
\[
\inf_{q\in\Delta^{D-1}}\sum_{z=1}^K\pi_z\,\mathrm{KL}(u_z\,\|\,q)=\mathrm{JS}_\pi(u_1,\ldots,u_K)>0,
\]
where $\mathrm{JS}_\pi$ is the weighted Jensen--Shannon divergence.
\end{restatement}

\begin{proof}
On the event $\phi(p_{1:t})=\varphi$, any fixed-summary forecaster $f=g\circ\phi$ outputs a single distribution $q=g(\varphi)$ independent of $Z_t$. For stochastic targets, the regime-conditional expected KL decomposes as
\[
\mathbb E\!\left[\mathrm{KL}(p_{t+1}\|q)\mid Z_t\!=\!z,\varphi\right]
=\mathbb E\!\left[\mathrm{KL}(p_{t+1}\|u_z)\mid Z_t\!=\!z,\varphi\right]+\mathrm{KL}(u_z\|q),
\]
where the first term is the regime-aware Bayes risk and does not depend on $q$. The fixed-summary excess risk is therefore
\[
R(q)=\sum_z\pi_z\,\mathrm{KL}(u_z\|q).
\]
The minimizer of $R(q)$ over $q\in\Delta^{D-1}$ is the mixture $\bar u=\sum_z\pi_z u_z$; substituting $\bar u$ into $R$ gives
\[
R(\bar u)=\sum_z\pi_z\,\mathrm{KL}(u_z\,\|\,\bar u)=\mathrm{JS}_\pi(u_1,\ldots,u_K),
\]
the weighted Jensen--Shannon divergence. $\mathrm{JS}_\pi$ is zero iff all $u_z$ are identical, so under the theorem's assumption it is strictly positive.
\end{proof}

\subsection{Theorem~\ref{thm:representability} (oracle representability): restatement and proof}

\begin{restatement}[Theorem~\ref{thm:representability}: CAST oracle representability]
Suppose that, conditional on $Z_t=z$, the Bayes successor has CAST form $u_z=(1-\rho_z)a_z+\rho_z T_z a_z$, $a_z=\lambda_z p_t+(1-\lambda_z)s_z$, with $s_z\in\Delta^{D-1}$, $\lambda_z\in[0,1]$, $\rho_z\in[0,\rho_{\max}]$ for some $\rho_{\max}\le 1$, and valid local stochastic $T_z$ (for ordered supports). Assume further that (i) there exists a causal representation $h_t=E(p_{1:t})$ identifying $Z_t$; (ii) for each $z$, the causal memory before time $t$ contains an anchor $s_z$, or anchors whose convex hull contains $s_z$; (iii) CAST gates and transport head can realize $(\lambda_z,\rho_z,T_z)$. Then there exists a CAST parameterization, or a limit of finite-softmax parameterizations, such that $\hat p_{t+1}=u_{Z_t}$; the conditional KL excess risk of this oracle CAST predictor is zero (vanishing in deterministic benchmarks).
\end{restatement}

\begin{proof}
Because the causal representation $h_t=E(p_{1:t})$ identifies $Z_t$, the retrieval heads can place all retrieval mass---or arbitrarily concentrated finite-softmax mass---on the memory anchor whose value equals $s_{Z_t}$ (by assumption (ii), such an anchor or convex hull of anchors exists in $\mathcal M_t$). The anchor gate sets $a_t=\lambda_{Z_t}p_t+(1-\lambda_{Z_t})s_{Z_t}$, and the transport head and strength gate set $T_t=T_{Z_t}$, $\rho_t=\rho_{Z_t}$. Substituting these choices into $\hat p_{t+1}=(1-\rho_t)a_t+\rho_t T_t a_t$ gives $\hat p_{t+1}=(1-\rho_{Z_t})a_{Z_t}+\rho_{Z_t} T_{Z_t}a_{Z_t}=u_{Z_t}$, which is the regime-aware Bayes successor. On $\Delta_\gamma^{D-1}$ (or under the smoothing convention), the conditional KL excess risk of this oracle CAST predictor is zero; on deterministic benchmarks it is zero one-step KL to the target.
\end{proof}

\subsection{Theorem~\ref{thm:transport} (ordered-support transport separation): restatement and proof}

\begin{restatement}[Theorem~\ref{thm:transport}: ordered-support transport separation]
Assume the support is ordered, $Z_t$ is identified, and the no-transport anchor class $\mathcal A_z^0=\{\lambda p_t+(1-\lambda)r:\lambda\in[0,1],\,r\in\mathcal R_z^0\}$ has $L^1$ gap $\delta_z=\inf_{q\in\mathcal A_z^0}\|u_z-q\|_1>0$ for each $z$. Then even an oracle no-transport method that knows $Z_t$ incurs conditional KL excess risk
\[
\inf_{q_z\in\mathcal A_z^0}\sum_z\pi_z\,\mathrm{KL}(u_z\,\|\,q_z)\ge\tfrac12\sum_z\pi_z\delta_z^2,
\]
while CAST attains zero oracle excess risk whenever $u_z$ lies in the CAST local-transport class.
\end{restatement}

\begin{proof}
By construction, for each $z$, every $q_z\in\mathcal A_z^0$ satisfies $\|u_z-q_z\|_1\ge\delta_z$. Pinsker's inequality (with KL measured in natural logarithms, matching the experimental evaluation) gives
\[
\mathrm{KL}(u_z\|q_z)\ge\tfrac12\|u_z-q_z\|_1^2\ge\tfrac12\delta_z^2.
\]
Averaging over $z$ with weights $\pi_z$ yields $\inf_{q_z\in\mathcal A_z^0}\sum_z\pi_z\mathrm{KL}(u_z\|q_z)\ge\tfrac12\sum_z\pi_z\delta_z^2$. The boundary case $\delta_z=0$ (transported successor already in the no-transport hull) gives no separation and matches the intended reading: transport is a formal advantage only when ordered local motion is not already representable as a no-transport mixture.
\end{proof}

\subsection{Proposition on approximation under imperfect retrieval/transport}

\begin{proposition}[Approximation under imperfect CAST estimates]
\label{prop:approx-app}
Let the oracle CAST successor be $u=(1-\rho)a+\rho Ta$, $a=\lambda p+(1-\lambda)r$, and let an estimated CAST predictor use $\tilde r,\tilde\lambda,\tilde\rho,\tilde T$, producing $\tilde u=(1-\tilde\rho)\tilde a+\tilde\rho\tilde T\tilde a$. Assume $\rho,\tilde\rho\in[0,\rho_{\max}]$ with $\rho_{\max}\le 1$, $\|\tilde r-r\|_1\le\epsilon_r$, $|\tilde\lambda-\lambda|\le\epsilon_\lambda$, $|\tilde\rho-\rho|\le\epsilon_\rho$, and $\|\tilde T\tilde a-T\tilde a\|_1\le\epsilon_T$. Then
\[
\|\tilde u-u\|_1\;\le\;\epsilon_r+2\epsilon_\lambda+\rho_{\max}\epsilon_T+2\epsilon_\rho.
\]
If additionally $u,\tilde u\in\Delta_\gamma^{D-1}$, then $\mathrm{KL}(u\,\|\,\tilde u)\le\tfrac{1}{\gamma}\bigl[\epsilon_r+2\epsilon_\lambda+\rho_{\max}\epsilon_T+2\epsilon_\rho\bigr]$.
\end{proposition}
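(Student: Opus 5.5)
The plan is a telescoping triangle-inequality argument in $L^1$. It rests on two facts. First, all of $p,r,\tilde r,a,\tilde a$ lie in $\Delta^{D-1}$, so differences of two simplex points have $L^1$ norm at most $2$. Second, a row-stochastic (column-sum-preserving in the action $a\mapsto Ta$ as defined in Section~\ref{sec:method}) local transport $T$ is an $L^1$ contraction: $\|Tx\|_1\le\|x\|_1$ for every signed vector $x$, since each coordinate $x(j)$ is split into nonnegative pieces summing to $x(j)$.

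First I bound the anchor error. Write
\[
\tilde a-a=(\tilde\lambda-\lambda)(p-\tilde r)+(1-\lambda)(\tilde r-r).
\]
This gives $\|\tilde a-a\|_1\le 2\epsilon_\lambda+\epsilon_r$, using $\|p-\tilde r\|_1\le 2$ and $1-\lambda\le 1$.

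Next I write $\tilde u-u$ as three pieces:
\[
\tilde u-u=\bigl[(1-\tilde\rho)\tilde a+\tilde\rho T\tilde a-(1-\rho)\tilde a-\rho T\tilde a\bigr]
+\tilde\rho(\tilde T\tilde a-T\tilde a)
+\bigl[(1-\rho)(\tilde a-a)+\rho T(\tilde a-a)\bigr].
\]
The first bracket equals $(\tilde\rho-\rho)(T\tilde a-\tilde a)$, whose norm is at most $2\epsilon_\rho$ because both $T\tilde a$ and $\tilde a$ lie on the simplex. The middle term is at most $\rho_{\max}\epsilon_T$. The last bracket is at most $(1-\rho)\|\tilde a-a\|_1+\rho\|\tilde a-a\|_1=\|\tilde a-a\|_1$ by the contraction property of $T$. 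Summing gives $\epsilon_r+2\epsilon_\lambda+\rho_{\max}\epsilon_T+2\epsilon_\rho$.

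For the KL statement, I use $\mathrm{KL}(u\|\tilde u)\le\chi^2(u\|\tilde u)=\sum_j (u(j)-\tilde u(j))^2/\tilde u(j)$. Since $\tilde u(j)\ge\gamma$ and $|u(j)-\tilde u(j)|\le 1$, this is at most $\frac1\gamma\sum_j|u(j)-\tilde u(j)|=\frac1\gamma\|u-\tilde u\|_1$. Alternatively, I can use $\log x\le x-1$ directly on $\sum_j u(j)\log(u(j)/\tilde u(j))$. Either route combined with the $L^1$ bound yields the claim.

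The main obstacle is the decomposition step. One must pair the $\rho$-perturbation with $T\tilde a$, and not with $Ta$, so that the $\epsilon_T$ hypothesis, which is stated at $\tilde a$, applies exactly. One must also verify that the clipped transport of Section~\ref{sec:method} is genuinely $L^1$-nonexpansive on signed vectors. This holds because each source bin's signed mass is distributed by nonnegative weights summing to one, so $\sum_k|(Tx)(k)|\le\sum_j|x(j)|$.
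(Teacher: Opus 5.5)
Your proof is correct. The $L^1$ part is essentially the paper's argument. Both use the same anchor bound: you write $\tilde a-a=(\tilde\lambda-\lambda)(p-\tilde r)+(1-\lambda)(\tilde r-r)$, and the paper uses the mirror form $(1-\tilde\lambda)(\tilde r-r)+(\tilde\lambda-\lambda)(p-r)$. Both also use $L^1$-nonexpansiveness of row-stochastic $T$ and $\|Tx-x\|_1\le 2$ on the simplex.

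The only difference is the intermediate point. The paper adds and subtracts $(1-\tilde\rho)a+\tilde\rho Ta$, so the $\rho$-perturbation is paired with $Ta-a$. The $\epsilon_T$ hypothesis then enters through $\|\tilde T\tilde a-Ta\|_1\le\|\tilde a-a\|_1+\epsilon_T$. Your remark that one \emph{must} pair the $\rho$-perturbation with $T\tilde a$ is therefore too strong; either ordering works.

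The KL step is where you genuinely differ. The paper invokes $1/\gamma$-Lipschitz continuity of $q\mapsto\mathrm{KL}(u\|q)$ in $L^1$ on $\Delta_\gamma^{D-1}$ and compares $q=u$ with $q=\tilde u$. You instead dominate KL by $\chi^2(u\|\tilde u)$ (equivalently, by $\log x\le x-1$). Your route visibly needs only the lower bound $\tilde u\in\Delta_\gamma^{D-1}$.

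Your route also gives more if you keep the square instead of discarding it via $|u(j)-\tilde u(j)|\le 1$. It then yields the second-order bound $\mathrm{KL}(u\|\tilde u)\le\frac{1}{\gamma}\|u-\tilde u\|_2^2\le\frac{1}{\gamma}\|u-\tilde u\|_1^2$. This is much sharper than the stated linear bound when the estimation errors are small. The paper's Lipschitz one-liner is more standard but yields only a bound linear in $\|u-\tilde u\|_1$.
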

\textit{Proof sketch.} The anchor error obeys $\|\tilde a-a\|_1\le\|(1-\tilde\lambda)(\tilde r-r)+(\tilde\lambda-\lambda)(p-r)\|_1\le\epsilon_r+2\epsilon_\lambda$, using $\|p-r\|_1\le 2$. Row-stochastic transports are $L^1$-nonexpansive: $\|\tilde T\tilde a-Ta\|_1\le\|\tilde a-a\|_1+\epsilon_T$. Adding and subtracting $(1-\tilde\rho)a+\tilde\rho Ta$ and applying $\|Ta-a\|_1\le 2$, $\tilde\rho\le\rho_{\max}$ yields the $L^1$ bound. On $\Delta_\gamma^{D-1}$ the map $q\mapsto\mathrm{KL}(u\|q)$ is $1/\gamma$-Lipschitz in $L^1$, giving the KL bound.\hfill$\square$

\subsection{Proposition on retrieval consistency}

\begin{proposition}[Retrieval consistency under Lipschitz successor map]
\label{prop:retrieval-app}
Let $m(h)=\mathbb E[p_{t+1}\mid h_t=h]\in\Delta^{D-1}$ and suppose $\|m(h)-m(h')\|_1\le L\|h-h'\|$. Write $p_{s+1}=m(h_s)+\xi_s$ where $\xi_s$ is the centered residual (so that $\sum_j\xi_s(j)=0$). Let $s^\star(t)=\arg\min_{s<t}\|h_t-h_s\|$, $d_t=\|h_t-h_{s^\star(t)}\|$. If the 1-NN retrieved successor is $r_t=p_{s^\star(t)+1}$, then
\[
\|r_t-m(h_t)\|_1\;\le\;Ld_t+\|\xi_{s^\star(t)}\|_1.
\]
\end{proposition}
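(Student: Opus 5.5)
The plan is a single add-and-subtract around the mean successor at the retrieved index, followed by the triangle inequality in $L^1$. Write $s^\star=s^\star(t)$. By the decomposition assumed in the statement, the retrieved successor satisfies
\[
r_t=p_{s^\star+1}=m(h_{s^\star})+\xi_{s^\star}.
\]
Hence
\[
r_t-m(h_t)=\bigl[m(h_{s^\star})-m(h_t)\bigr]+\xi_{s^\star},
\]
and the triangle inequality gives
\[
\|r_t-m(h_t)\|_1\le\|m(h_{s^\star})-m(h_t)\|_1+\|\xi_{s^\star}\|_1 .
\]

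The first term is handled by the Lipschitz assumption on $m$, applied to the pair $(h_t,h_{s^\star})$:
\[
\|m(h_{s^\star})-m(h_t)\|_1\le L\|h_t-h_{s^\star}\|=Ld_t ,
\]
where the last equality is the definition of $d_t$. The second term is kept as is. Combining the two gives the claimed bound $Ld_t+\|\xi_{s^\star(t)}\|_1$.

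The only point needing care is bookkeeping rather than analysis. We must check that $s^\star(t)$ ranges over $s<t$, so that $(h_{s^\star},p_{s^\star+1})$ is a legitimate entry of the strictly causal memory $\mathcal M_t$. We must also check that the residual is defined at the retrieved index, so that the decomposition $p_{s+1}=m(h_s)+\xi_s$ applies at $s=s^\star$. If the argmin is not unique, any minimizer works, since the bound uses only $d_t$.

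The property $\sum_j\xi_s(j)=0$ is not needed for the inequality. It only ensures that $m(h_s)+\xi_s$ stays on the affine hull of the simplex.

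Finally, I would note the consistency reading mentioned in the main text. If the causal memory becomes dense, then $d_t\to 0$. If in addition the residuals are small, for example in deterministic benchmarks where $\xi\equiv 0$, the retrieved anchor converges to $m(h_t)$. Nothing beyond the triangle inequality is required.
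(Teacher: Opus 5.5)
Your proof is correct and is the intended argument: the paper gives no separate proof of this proposition, since the bound follows immediately by adding and subtracting $m(h_{s^\star(t)})$, applying the $L^1$ triangle inequality, and using the Lipschitz assumption on the pair $(h_t,h_{s^\star(t)})$, exactly as you do. Your side remarks are also accurate: the zero-sum property of $\xi_s$ is unused (it holds automatically, since $p_{s+1}$ and $m(h_s)$ both lie in $\Delta^{D-1}$), and ties in the argmin are harmless.
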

The proposition weakens the exact matched-memory assumption in Theorem~\ref{thm:representability}: as the causal memory becomes dense in the relevant representation space so that $d_t\to 0$, the 1-NN retrieval anchor converges to the Bayes successor map $m(h_t)$ up to the residual noise $\xi_{s^\star(t)}$. CAST's multi-head soft-attention retrieval (Section~\ref{sec:method}) is not directly characterized by this statement, but the proposition captures the regime where concentrated retrieval mass localizes around nearby causal contexts.

\subsection{Proposition on simplex closure and rollout drift}

\begin{proposition}[Simplex closure and rollout drift control]
\label{prop:simplex-app}
At every CAST step: (a) if $p_t,r_t\in\Delta^{D-1}$, then $a_t\in\Delta^{D-1}$; (b) if $T_t$ is row-stochastic per Section~\ref{sec:method}, then $T_t a_t\in\Delta^{D-1}$; (c) if $0\le\rho_t\le 1$, then $\hat p_{t+1}=(1-\rho_t)a_t+\rho_t T_t a_t\in\Delta^{D-1}$. For ordered supports with unit ground metric and radius $r$, $W_1(a_t,T_t a_t)\le r$ and $W_1(a_t,\hat p_{t+1})\le\rho_t r$; with the mean-shift budget $|\mu(T_t a_t)-\mu(a_t)|\le B_t$, $|\mu(\hat p_{t+1})-\mu(a_t)|\le\rho_t B_t$.
\end{proposition}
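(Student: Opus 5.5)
Parts (a)--(c) follow from convexity of the simplex and the column-sum property of the pushforward; the $W_1$ bound needs an explicit coupling, and the mean bound follows from linearity of $\mu$.

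\emph{Part (a).} Since $\lambda_t\in[0,1]$ and $\Delta^{D-1}$ is convex, $a_t=\lambda_t p_t+(1-\lambda_t)r_t\in\Delta^{D-1}$.

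\emph{Part (b).} Write $(T_t a)(k)=\sum_j a(j)\sum_o K_t(j,o)\mathbf 1\{k=\mathrm{clip}(j+o,1,D)\}$. Nonnegativity is immediate. For the total mass, sum over $k$ first: for fixed $(j,o)$ the indicator equals $1$ for exactly one $k$ (clipping keeps the target inside $\{1,\dots,D\}$). Hence the total mass is $\sum_j a(j)\sum_o K_t(j,o)=\sum_j a(j)=1$. The same argument works for any radius $r$ with offsets $o\in\{-r,\dots,r\}$.

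\emph{Part (c).} $\hat p_{t+1}$ is a convex combination of two simplex points with weight $\rho_t\in[0,1]$, so it lies in $\Delta^{D-1}$.

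\emph{Transport bound.} Define the coupling $\gamma(j,k)=a_t(j)\sum_o K_t(j,o)\mathbf 1\{k=\mathrm{clip}(j+o,1,D)\}$. Its first marginal is $a_t$ by row-stochasticity, and its second marginal is $T_t a_t$ by definition. Clipping only shortens moves, so $|\mathrm{clip}(j+o)-j|\le|o|\le r$. Therefore the cost is at most $r\sum_{j,k}\gamma(j,k)=r$, which gives $W_1(a_t,T_ta_t)\le r$.

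\emph{Mixture bound.} Use the coupling $(1-\rho_t)\,\mathrm{diag}(a_t)+\rho_t\gamma$ between $a_t$ and $\hat p_{t+1}$. Its marginals are $a_t$ and $(1-\rho_t)a_t+\rho_tT_ta_t=\hat p_{t+1}$. The diagonal part costs zero, so the total cost is at most $\rho_t r$, giving $W_1(a_t,\hat p_{t+1})\le\rho_t r$. Equivalently, this is joint convexity of $W_1$ with $W_1(a,a)=0$.

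\emph{Mean bound.} $\mu(p)=\sum_j j\,p(j)$ is linear, so
\[
\mu(\hat p_{t+1})-\mu(a_t)=\rho_t\bigl(\mu(T_ta_t)-\mu(a_t)\bigr).
\]
Under the budget assumption this gives $|\mu(\hat p_{t+1})-\mu(a_t)|\le\rho_t B_t$.

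One caveat: in the implementation the effective strength is $g_t\rho_t$, not $\rho_t$. Here $\rho_t$ should be read as that post-scaling strength. Since $g_t\le1$, all the statements above still hold.

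The only step needing care is the mass-conservation count under clipping in (b). It is settled by summing over $k$ before $j$ and noting that clipping is a function, so each $(j,o)$ is sent to exactly one target state.
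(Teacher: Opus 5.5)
Your proposal is correct and takes essentially the same approach as the paper, which only states that the closure and drift bounds follow from convexity and row-stochasticity, and that the mean bound is the same convex-mixture (linearity) calculation applied to $\mu(\cdot)$. Your explicit coupling $\gamma$, together with $|\mathrm{clip}(j+o,1,D)-j|\le|o|$ and the mixed coupling $(1-\rho_t)\,\mathrm{diag}(a_t)+\rho_t\gamma$, fills in the $W_1$ details that the paper leaves implicit.
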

These are the simplex-closure and drift-control properties used in Section~\ref{sec:method}; both follow directly from convexity and row-stochasticity, and the mean bound is the same convex-mixture calculation applied to $\mu(\cdot)$.

\subsection{Queueing corollary and unordered case}

\paragraph{Queue occupancy.} For queue occupancy $p_t(k)=\mathbb P(N_t=k)$, the current histogram can be identical while the load phase differs. In the notation of Section~\ref{sec:method}, set $Z_t\in\{\mathrm{up},\mathrm{down}\}$ and suppose at an aliasing distribution $p^\star$: $u_{\mathrm{up}}=(1-\rho)p^\star+\rho T_{\mathrm{right}}p^\star$, $u_{\mathrm{down}}=(1-\rho)p^\star+\rho T_{\mathrm{left}}p^\star$, with $T_{\mathrm{right}},T_{\mathrm{left}}$ opposite radius-1 local transports. A baseline summary that cannot distinguish phases pays $\mathrm{JS}_{1/2}(u_{\mathrm{up}},u_{\mathrm{down}})$ by Theorem~\ref{thm:aliasing}; if the anchor-only hull excludes these transported successors by $L^1$ margin $\delta$, it additionally pays $\tfrac12\delta^2$ by Theorem~\ref{thm:transport}. CAST avoids both penalties when its causal representation identifies the phase and its transport head realizes $T_{\mathrm{right}},T_{\mathrm{left}}$.

\paragraph{Unordered compositional data.} For unordered supports, set $\rho_z=0$; the transport separation does not apply and the relevant theory is Theorem~\ref{thm:aliasing} plus Theorem~\ref{thm:representability}: if similar visible compositions have different contextual successors, a summary that aliases those contexts must average incompatible successors, while a history-conditioned successor retrieval operator can represent the regime-specific Bayes successor.

\subsection{What the theorems do and do not imply}

The three theorems describe a regime in which CAST's hypothesis class is expressive enough to avoid unavoidable averaging penalties. They are not sample-complexity or optimization guarantees, and they do not imply that every observed CAST advantage in Section~\ref{sec:results} stems from aliasing alone---rollout stability, for example, is also driven by persistence anchoring (Table~\ref{tab:ablation-summary}) and by the realized-mean-shift budget in Proposition~\ref{prop:simplex-app}. The empirical-diagnostic check in Appendix~\ref{app:theory-diagnostics} connects the theoretical regime to benchmark behavior dataset-by-dataset.

\section{Synthetic Latent-Kernel Aliasing Experiment}
\label{app:synthetic-aliasing}

This appendix elaborates on the controlled experiment summarized in Section~\ref{sec:results-synthetic}. The setup instantiates the theorem's hard case directly: the current simplex state $p^\star$ is identical across two equiprobable regimes, while the causal history reveals whether the successor is a left or right radius-1 local transport of the anchor. A fixed-summary forecaster must average incompatible successors, a phase-aware anchor-only comparator cannot leave the no-transport hull, and CAST's local transport class can realize the regime-specific successor.

The numerical results were reported in Table~\ref{tab:synthetic-aliasing-experiment}. Figure~\ref{fig:synthetic-aliasing-profiles} visualizes the construction and the resulting KL gap: the two successors are obvious under causal context, the fixed-summary optimum is exactly their mixture average, and CAST's oracle and trained local-transport head are essentially exact.

\begin{figure}[t]
    \centering
    \includegraphics[width=0.78\linewidth]{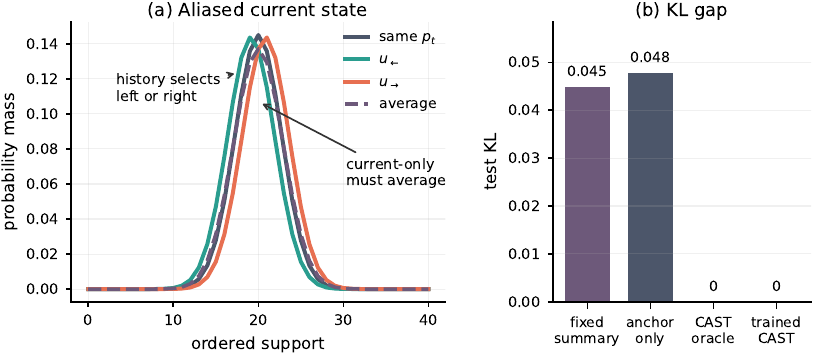}
    \caption{Synthetic latent-kernel aliasing construction. Panel (a): the same current distribution with two history-dependent local-transport successors; a fixed-summary forecaster must collapse them to their average. Panel (b): the resulting KL gap (fixed-summary and anchor-only comparators retain positive error; CAST's oracle and trained local-transport head are essentially exact).}
    \label{fig:synthetic-aliasing-profiles}
\end{figure}

\section{Benchmark Construction and Data Sources}
\label{app:benchmarks}

Each benchmark is converted into regularly spaced probability-vector sequences. The raw observation at time $t$ is first aggregated into nonnegative category mass $x_t\in\mathbb{R}_+^D$ and then normalized to $p_t=x_t/\sum_j x_{t,j}$; rows with no usable mass are excluded. Target positions are adjacent one-step pairs $(p_{1:t},p_{t+1})$ scored through split-specific loss masks (tensor-based public panels) or held-out queue-system manifests (queue benchmark).

\subsection{Source, citation, and license map}

\begin{center}
\scriptsize
\setlength{\tabcolsep}{3.0pt}
\begin{tabular}{@{}p{0.18\linewidth}p{0.40\linewidth}p{0.35\linewidth}@{}}
\toprule
Section & Source & License / reuse note \\
\midrule
BioTIME & BioTIME 2.0 \citep{dornelas2025biotime} & Open access, per-study metadata licenses; cite database and studies. \\
Ember Monthly & Ember monthly electricity data \citep{ember2026monthlyelectricity} & CC-BY-4.0. \\
OWID Dietary & OWID Diet Compositions \citep{owid2023dietcompositions} & CC BY; third-party sources carry own terms. \\
CDC Weekly Deaths & CDC/NCHS weekly deaths \citep{cdc_nchs2023weeklydeaths} & Public Domain U.S. Government; frozen 2023-09-27. \\
BLS QCEW & BLS Quarterly Census of Employment and Wages \citep{bls2026qcew} & Public domain; cite BLS. \\
EPA AirData AQI & EPA AirData daily AQI by county \citep{epa2026airdata} & Public EPA data product. \\
NOAA Storm Events & NOAA/NCEI Storm Events Database \citep{noaa_ncei_stormevents} & U.S. public domain; NCEI toward CC0. \\
NYC TLC Trip & NYC TLC trip records \citep{nyctlc_trip_records} & NYC Open Data; cite source/version. \\
Queue Occupancy & Synthetic $G/G/1$ and $G_t/G/1$ following \citet{di2025transformerqueue,lindley1952theory,harchol2013performance} & Project-generated; cite protocol. \\
\bottomrule
\end{tabular}
\end{center}

\noindent Snapshot URLs and retrieval dates are recorded in the release manifest; they include BioTIME (\texttt{biotime.st-andrews.ac.uk}), Ember (\texttt{files.ember-energy.org/public-downloads/monthly\_full\_release\_long\_format.csv}), OWID (\texttt{ourworldindata.org/grapher/dietary-composition-by-country.csv}), CDC (\texttt{data.cdc.gov/d/muzy-jte6}), BLS QCEW (\texttt{data.bls.gov/cew/data/files/\{year\}/csv/}), EPA AirData (\texttt{aqs.epa.gov/aqsweb/airdata/daily\_aqi\_by\_county\_\{year\}.zip}), NOAA Storm Events (\texttt{ncei.noaa.gov/pub/data/swdi/stormevents/csvfiles/}), and NYC TLC (\texttt{nyc.gov/site/tlc/about/tlc-trip-record-data.page}).

\subsection{Dataset-by-dataset preprocessing}

\paragraph{BioTIME Birds.} BioTIME 2.0 is filtered to \texttt{Birds} species-level rows; sequence unit is a site key (study id, latitude, longitude, depth); time is annual; the longest contiguous annual window is selected, requiring $\ge 15$ observations. The vocabulary is the top-127 species by measure mass plus an \texttt{OTHER} residual ($D=128$, covering $\approx 96.2\%$ of retained mass before the residual). The processed track contains 646 sequences across 37 studies, 15{,}914 site-year observations; per-sequence chronological 70/15/15 gives 10{,}412 train / 2{,}039 val / 2{,}817 test one-step targets.

\paragraph{Ember Monthly.} Monthly full-release long-format CSV normalized across nine detailed-fuel generation categories (\texttt{Coal, Gas, Other Fossil, Nuclear, Hydro, Wind, Solar, Bioenergy, Other Renewables}); $D=9$. 87 country/economy sequences, global raw range 1999-01 to 2026-03, median length 109 monthly observations. Tail-12/12 split when sequence is long enough (84 sequences); 70/15/15 chronological otherwise (3 sequences). Targets: 8{,}272 / 1{,}027 / 1{,}031.

\paragraph{OWID Dietary.} Balanced 1961--2023 annual panel across current ISO3 countries, 25 food-commodity groups; $D=25$, 135 sequences of length 63. Fixed chronological: train 1962--2004, val 2005--2013, test 2014--2023; 5{,}805 / 1{,}215 / 1{,}350 targets.

\paragraph{CDC Weekly Deaths.} Six causes (diseases of heart, malignant neoplasms, cerebrovascular, chronic lower respiratory, COVID-19 underlying, \texttt{OTHER\_CAUSES}); 50 state sequences of length 190. Suppressed 1--9 counts imputed at 5.0; final 4 provisional weeks dropped before splitting; train 2020W02--2021W33, val 2021W34--2022W33, test 2022W34--2023W33; 4{,}250 / 2{,}600 / 2{,}600 targets.

\paragraph{BLS QCEW.} Annual singlefile data 2010--2024 aggregated to state $\times$ supersector (13 categories); $D=13$. 33{,}301 retained rows from $\sim$53.9M raw; 53 state sequences of length 15; per-sequence 70/15/15; 477 / 106 / 159 targets.

\paragraph{EPA AirData AQI.} Daily county AQI 2000--2024 aggregated to county-month distributions over six ordered categories (\texttt{Good, Moderate, USG, Unhealthy, Very Unhealthy, Hazardous}); $D=6$. 7{,}818{,}930 rows retained; 800 sequences, median length 191. Tail-12/12 (796 sequences) or 70/15/15 (4); 136{,}728 / 9{,}584 / 9{,}592 targets. The ordering of categories makes $W_1$ a meaningful diagnostic.

\paragraph{NOAA Storm Events.} Corrected yearly details files 2000--2024 aggregated to state $\times$ month event-type shares over top 31 types plus \texttt{OTHER} ($D=32$); 49 sequences, median length 244. Tail-12/12 split; 9{,}594 / 588 / 588 targets.

\paragraph{NYC TLC Trip.} Yellow taxi trip-record Parquet files (2023) aggregated by pickup hour and calendar week over 265 taxi zones plus \texttt{UNKNOWN} ($D=266$). 24 hour-of-day sequences of length 52--53; final-8-week tail split; 863 / 192 / 192 targets.

\paragraph{Queue Occupancy (Homogeneous, Nonhomogeneous, Combined).} Synthetic single-server queues following \citet{di2025transformerqueue}. Each occupancy distribution is $p_t(k)=\mathbb P(N_t\!=\!k)$ with ordered support, so local transport and $W_1$ are meaningful. Homogeneous ($G/G/1$): inter-arrival/service distributions sampled from seven parametric families (Gamma, Erlang, lognormal, two-normal mixture, hyperexponential, uniform, Weibull) with configuration-specific random parameters, expected utilization restricted to $\approx 0.26$--$0.6$. Nonhomogeneous ($G_t/G/1$): six candidate families (as above except Weibull), arrival-side parameters varying sinusoidally. Combined: pooled. Protocol uses $N=500$ arrivals and $R=10{,}000$ replications per configuration; $\Delta t=1$. The CAST benchmark preserves full trajectories. File-level 70/10/20 split stratified by support-width deciles, seed 42: Homogeneous 7000/1000/2000 systems, Nonhomogeneous 6930/990/1980, Combined 13928/1989/3983. Models see only the \texttt{num\_in\_sys\_*} columns; raw arrival/service moments are held out. For offline one-step queue metrics, a streamed fixed-window payload uses 24 test systems and a 128-step offline window; rollout uses 12 held-out systems, 128-step observed context, and a 64-step autoregressive horizon.

\subsection{Benchmark design rationale}

The suite is designed to support three claims simultaneously. First, distribution-valued forecasting is not a niche formatting issue: the suite spans ecology, energy, diet, mortality, labor, air quality, severe weather, mobility, and queueing---all settings where the forecast object is naturally a composition or probability law. Second, general sequence models can be cross-applied, but the task has structure that generic multivariate TSF models do not directly encode: simplex-valued states, time-respecting successor reuse, persistence as a meaningful anchor, and ordered local mass motion on supports with geometry. Third, the mix of smooth low-frequency panels, high-variance event mixtures, high-dimensional sparse mobility shares, and controlled ordered queue dynamics makes it possible to argue that CAST is not tuned to a single dataset family.

\section{Experimental and Implementation Details}
\label{app:experimental-details}

\subsection{Metrics}

For target $p$ and prediction $q$ we report
\[
\mathrm{KL}(p\|q)=\sum_j p_j\log\tfrac{p_j}{q_j},\qquad
\mathrm{JSD}(p,q)=\tfrac{1}{2}\mathrm{KL}(p\|m)+\tfrac{1}{2}\mathrm{KL}(q\|m),\ m=\tfrac{1}{2}(p+q),
\]
$L^1(p,q)=\sum_j|p_j-q_j|$, Bray--Curtis distance, and (ordered supports) $W_1(p,q)=\sum_j\bigl|\sum_{k\le j}(p_k-q_k)\bigr|$. Prediction and target are renormalized to the simplex before metrics, with $\epsilon=10^{-8}$ for numerical stability. Primary one-step ranking metric is offline KL; primary long-horizon metric is rollout JSD.

\subsection{Splits and rollout horizons}

Non-queue split policies are: BioTIME, BLS QCEW---per-sequence chronological 70/15/15; Ember, EPA AirData AQI, NOAA Storm Events---tail 12/12 if history allows, else 70/15/15; CDC Weekly Deaths---tail 52/52 after dropping the final 4 provisional weeks; OWID Dietary---fixed 1962--2004/2005--2013/2014--2023; NYC TLC Trip---tail 8/8 weeks. Queue split: file-level 70/10/20 by whole queue system, stratified by support-width deciles, split seed 42. Rollout (context / horizon / max examples): BioTIME 16/4/8, Ember 36/12/8, OWID 32/10/8, CDC 52/26/8, BLS 8/2/8, EPA 36/12/8, NOAA 36/12/8, NYC TLC 26/8/8, each queue section 128/64/12.

Target counts after splitting (tensor-based): BioTIME 10{,}412 / 2{,}039 / 2{,}817; Ember 8{,}272 / 1{,}027 / 1{,}031; OWID 5{,}805 / 1{,}215 / 1{,}350; CDC 4{,}250 / 2{,}600 / 2{,}600; BLS 477 / 106 / 159; EPA 136{,}728 / 9{,}584 / 9{,}592; NOAA 9{,}594 / 588 / 588; NYC TLC 863 / 192 / 192. Queue split units are whole queue-system trajectories (see Appendix~\ref{app:benchmarks}).

\subsection{Training protocol}

All trainable models optimize the one-step KL loss on masked adjacent pairs with AdamW, learning rate $3\!\times\!10^{-4}$ (linear warmup to base, constant thereafter), weight decay 0.1, gradient clipping 1.0; 200-step warmup; 10{,}000 iterations for non-queue sections, 12{,}000 for queue sections; evaluation every 250 steps with 20 (non-queue) or 50 (queue) validation batches; batch size 16 (non-queue) or 4 (queue) with gradient accumulation 1 or 2; mixed precision enabled when CUDA bfloat16 is available; no \texttt{torch.compile}. Common architecture defaults: hidden dimension 256, 6 layers, 4 attention heads, dropout 0.1. Model selection uses lowest validation one-step KL. Fit-only baselines (\texttt{persistence}, \texttt{analog\_successor}, \texttt{ilr\_var}, \texttt{compositional\_ets}) are fitted on the training split without SGD.

Dataset-specific block sizes (training block / model max sequence / offline batch / rollout batch): BioTIME 8/64/4/4; Ember 24/384/2/2; OWID 24/64/8/8; CDC 52/192/4/4; BLS 8/16/16/8; EPA 24/192/4/4; NOAA 24/192/4/4; NYC TLC 26/128/4/4; each queue section 128/128/4/4.

\subsection{CAST configuration}

\begin{center}
\small
\setlength{\tabcolsep}{4.2pt}
\begin{tabular}{@{}ll@{}}
\toprule
Quantity & Value \\
\midrule
Retrieval heads $M$ & 2 \\
Retrieval head dimension $d_r$ & 64 \\
Persistence gate initialization ($\lambda_t$) & $\approx 0.55$ \\
Persistence gate bounds $[\lambda_{\min},\lambda_{\max}]$ & $[0.05, 0.95]$ \\
Transport radius & 1 \\
Support basis dimension & 2 \\
Initial transport strength & 0.02 \\
Maximum transport strength $\rho_{\max}$ & 0.20 \\
Base mean-shift budget $\delta_\mu$ & 0.25 \\
Scale-dependent shift budget coefficient $\delta_\sigma$ & 0.10 \\
Structural regularization weight $\lambda_{\mathrm{op}}$ & $5\times 10^{-4}$ \\
Smoothness regularization weight & $10^{-4}$ \\
\bottomrule
\end{tabular}
\end{center}

For unordered supports CAST disables local transport ($\rho_t\equiv 0$) and uses only the persistence--successor anchor; for ordered supports the radius-1 stochastic kernel and bounded transport-strength gate are active.

\subsection{Operator regularizer decomposition}

$\mathcal R_{\mathrm{op}}(T_t,a_t)$ is a weighted sum of four target-free terms: (1) a strength penalty proportional to $\rho_t$; (2) an off-identity penalty $\sum_{j,o\ne 0}K_t(j,o)^2$ weighted by the kernel's deviation from the identity; (3) a neighbor-smoothness penalty $\sum_{j}\|K_t(j,\cdot)-K_t(j+1,\cdot)\|^2$ across adjacent support bins; and (4) a mean-shift penalty $(\Delta\mu_t/B_t)^2$ that discourages the transport head from consuming the entire budget. All terms depend only on the learned transport $T_t$ and the current anchor $a_t$, never on additional future targets.

\subsection{Implementation and compute}

Experiments are implemented in PyTorch. Neural models use distributed data parallel across two GPUs on nodes with A100/H200/L40S GPUs, 16 CPU cores, and $\ge 128$\,GB host memory per GPU; fit-only statistical baselines run on CPU. All methods use identical splits, loss masks, simplex normalization, and metric computation. The resource settings are sufficient for the reported suite; CAST itself has no architectural requirement for this particular compute.

\subsection{Seeds and reporting}

The main tables report seed 42 for model initialization and data shuffling; the queue file-level split also uses seed 42. Under distributed execution, each process uses a deterministic rank-offset seed. Tables in Section~\ref{sec:results} are a broad single-seed benchmark comparison, supplemented by a targeted five-seed CAST robustness study in Appendix~\ref{app:cast-seed-robustness}.

\subsection{Reporting caveats}

(i) Offline one-step evaluation is teacher-forced online evaluation over held-out target positions. (ii) Rollout evaluation is autoregressive and does not feed future ground truth after the context. (iii) Queue offline metrics are computed from deterministic streamed evaluation windows; queue rollout is the primary long-horizon stability evidence. (iv) Main results are single-seed; a targeted CAST seed study is reported separately. (v) Incomplete baselines, if any, would be excluded from that model's average rank on missing sections; in our sweep all 16 methods complete all 11 sections.

\section{Full Quantitative Results}
\label{app:full-results}

This appendix reports the complete per-section tables backing the main paper summaries. Table~\ref{tab:appendix-offline-kl-rank-matrix} and Table~\ref{tab:appendix-rollout-jsd-rank-matrix} give the full rank matrix across the 11 sections and all 16 methods. Table~\ref{tab:appendix-full-benchmark-metrics} gives per-section absolute metric values (offline KL/JSD/$L^1$, rollout KL/JSD/$L^1$). Table~\ref{tab:appendix-ablation-full-metrics} and Table~\ref{tab:appendix-ablation-best-variant-deltas} report the full component-study metrics underlying the ablation summary in Table~\ref{tab:ablation-summary}.


\begin{table*}[t]
\centering
\caption{Appendix rank matrix for offline KL. Lower ranks are better; values are ranks within each benchmark section and Avg. is the 11-section mean rank.}
\label{tab:appendix-offline-kl-rank-matrix}
\scriptsize
\setlength{\tabcolsep}{2.4pt}
\begin{tabular}{@{}lrrrrrrrrrrrr@{}}
\toprule
Method & Bio. & Ember & OWID & CDC & BLS & EPA & NOAA & NYC & Q-Hom. & Q-Non. & Q-All & Avg. \\
\midrule
\textbf{CAST} & \textbf{1} & \textbf{1} & \textbf{1} & \textbf{1} & \textbf{1} & 2 & 2 & \textbf{1} & 2 & \textbf{1} & \textbf{1} & \textbf{1.27} \\
Comp. ETS & 4 & 3 & 2 & 5 & 3 & 13 & 14 & 5 & 3 & 3 & 3 & 5.27 \\
Persistence & 5 & 2 & 3 & 7 & 2 & 12 & 15 & 8 & 4 & 2 & 2 & 5.64 \\
N-HiTS & 2 & 5 & 6 & 2 & 5 & \textbf{1} & \textbf{1} & 2 & 9 & 11 & 14 & 5.27 \\
Informer & 6 & 8 & 8 & 4 & 11 & 6 & 4 & 4 & 6 & 6 & 8 & 6.45 \\
Transformer & 11 & 9 & 7 & 6 & 16 & 3 & 6 & 6 & 11 & 12 & 12 & 9.00 \\
ilr-VAR & 15 & 4 & 4 & 3 & 4 & 14 & 13 & 16 & \textbf{1} & 4 & 4 & 7.45 \\
GRU & 10 & 11 & 12 & 10 & 8 & 4 & 7 & 3 & 12 & 7 & 10 & 8.55 \\
iTransformer & 7 & 10 & 11 & 16 & 13 & 5 & 8 & 11 & 5 & 5 & 5 & 8.73 \\
TCN & 9 & 16 & 10 & 9 & 9 & 9 & 3 & 9 & 7 & 8 & 6 & 8.64 \\
TimeMixer & 3 & 12 & 9 & 13 & 14 & 11 & 5 & 7 & 8 & 10 & 7 & 9.00 \\
TiDE & 8 & 13 & 14 & 14 & 12 & 8 & 10 & 12 & 10 & 9 & 9 & 10.82 \\
LSTM & 13 & 14 & 15 & 8 & 7 & 15 & 11 & 13 & 13 & 15 & 11 & 12.27 \\
Analog succ. & 16 & 7 & 5 & 11 & 6 & 16 & 16 & 15 & 14 & 13 & 13 & 12.00 \\
Autoformer & 12 & 15 & 16 & 12 & 10 & 7 & 12 & 10 & 15 & 14 & 15 & 12.55 \\
DLinear & 14 & 6 & 13 & 15 & 15 & 10 & 9 & 14 & 16 & 16 & 16 & 13.09 \\
\bottomrule
\end{tabular}
\end{table*}

\begin{table*}[t]
\centering
\caption{Appendix rank matrix for rollout JSD. Lower ranks are better; values are ranks within each benchmark section and Avg. is the 11-section mean rank.}
\label{tab:appendix-rollout-jsd-rank-matrix}
\scriptsize
\setlength{\tabcolsep}{2.4pt}
\begin{tabular}{@{}lrrrrrrrrrrrr@{}}
\toprule
Method & Bio. & Ember & OWID & CDC & BLS & EPA & NOAA & NYC & Q-Hom. & Q-Non. & Q-All & Avg. \\
\midrule
\textbf{CAST} & 3 & \textbf{1} & 2 & 8 & \textbf{1} & \textbf{1} & \textbf{1} & \textbf{1} & \textbf{1} & \textbf{1} & \textbf{1} & \textbf{1.91} \\
Comp. ETS & \textbf{1} & 3 & 3 & 2 & 5 & 4 & 12 & 2 & 6 & 3 & 2 & 3.91 \\
Persistence & 2 & 4 & \textbf{1} & \textbf{1} & 2 & 5 & 15 & 4 & 2 & 2 & 3 & 3.73 \\
N-HiTS & 4 & 5 & 5 & 5 & 6 & 6 & 4 & 3 & 5 & 11 & 7 & 5.55 \\
Informer & 9 & 10 & 11 & 4 & 9 & 3 & 8 & 7 & 8 & 5 & 5 & 7.18 \\
Transformer & 10 & 7 & 7 & 7 & 11 & 2 & 5 & 6 & 4 & 10 & 4 & 6.64 \\
ilr-VAR & 15 & 2 & 4 & 3 & 3 & 11 & 6 & 16 & 10 & 9 & 16 & 8.64 \\
GRU & 13 & 8 & 8 & 15 & 8 & 13 & 3 & 8 & 3 & 4 & 8 & 8.27 \\
iTransformer & 5 & 12 & 10 & 14 & 14 & 9 & 9 & 9 & 7 & 8 & 10 & 9.73 \\
TCN & 8 & 13 & 13 & 10 & 12 & 8 & 7 & 10 & 12 & 16 & 15 & 11.27 \\
TimeMixer & 6 & 15 & 9 & 13 & 15 & 14 & 10 & 5 & 14 & 15 & 9 & 11.36 \\
TiDE & 7 & 14 & 12 & 9 & 10 & 10 & 11 & 11 & 9 & 7 & 12 & 10.18 \\
LSTM & 16 & 9 & 14 & 12 & 7 & 7 & 2 & 12 & 13 & 6 & 6 & 9.45 \\
Analog succ. & 12 & 6 & 6 & 11 & 4 & 16 & 16 & 15 & 11 & 12 & 14 & 11.18 \\
Autoformer & 14 & 16 & 15 & 6 & 13 & 15 & 14 & 14 & 16 & 14 & 13 & 13.64 \\
DLinear & 11 & 11 & 16 & 16 & 16 & 12 & 13 & 13 & 15 & 13 & 11 & 13.36 \\
\bottomrule
\end{tabular}
\end{table*}

\begingroup
\scriptsize
\setlength{\tabcolsep}{2.2pt}
\begin{longtable}{@{}llrrrrrrrr@{}}
\caption{Full per-section benchmark results for all completed methods. Lower is better for all metric columns. Rows are ordered by offline KL rank within each benchmark section; RO rank is the rollout JSD rank.}\label{tab:appendix-full-benchmark-metrics}\\
\toprule
Dataset & Method & KL rank & Offline KL & Offline JSD & Offline L1 & RO rank & Rollout KL & Rollout JSD & Rollout L1 \\
\midrule
\endfirsthead
\caption[]{Full per-section benchmark results for all completed methods (continued).}\\
\toprule
Dataset & Method & KL rank & Offline KL & Offline JSD & Offline L1 & RO rank & Rollout KL & Rollout JSD & Rollout L1 \\
\midrule
\endhead
BioTIME & \textbf{CAST} & \textbf{1} & \textbf{0.131296} & 0.0288672 & 0.296733 & 3 & 0.128925 & 0.0323185 & 0.34223 \\
 & N-HiTS & 2 & 0.184371 & 0.0441001 & 0.38581 & 4 & 0.278072 & 0.0652255 & 0.515977 \\
 & TimeMixer & 3 & 0.239845 & 0.0587172 & 0.459311 & 6 & 0.460469 & 0.119345 & 0.710424 \\
 & Comp. ETS & 4 & 0.257355 & 0.0269796 & 0.273036 & \textbf{1} & 0.113159 & \textbf{0.0220623} & 0.264881 \\
 & Persistence & 5 & 0.300852 & 0.0325042 & 0.307927 & 2 & 0.130495 & 0.0258193 & 0.279001 \\
 & Informer & 6 & 0.316669 & 0.073841 & 0.526572 & 9 & 1.03989 & 0.211057 & 0.959278 \\
 & iTransformer & 7 & 0.336148 & 0.0752553 & 0.54116 & 5 & 0.388964 & 0.0958021 & 0.644366 \\
 & TiDE & 8 & 0.391031 & 0.0918333 & 0.587084 & 7 & 0.633637 & 0.155722 & 0.843999 \\
 & TCN & 9 & 0.410141 & 0.0967683 & 0.602273 & 8 & 0.807651 & 0.172641 & 0.882695 \\
 & GRU & 10 & 0.419028 & 0.0935012 & 0.583072 & 13 & 1.4907 & 0.312032 & 1.22414 \\
 & Transformer & 11 & 0.494601 & 0.107203 & 0.651694 & 10 & 1.19113 & 0.253003 & 1.1219 \\
 & Autoformer & 12 & 0.983137 & 0.204876 & 0.952815 & 14 & 4.49785 & 0.470729 & 1.53171 \\
 & LSTM & 13 & 1.07034 & 0.235398 & 1.02791 & 16 & 3.82639 & 0.504036 & 1.63129 \\
 & DLinear & 14 & 1.18268 & 0.267732 & 1.10605 & 11 & 2.09595 & 0.280232 & 1.18052 \\
 & ilr-VAR & 15 & 1.55203 & 0.140215 & 0.699709 & 15 & 9.9853 & 0.502051 & 1.66066 \\
 & Analog succ. & 16 & 3.07778 & 0.271958 & 0.98551 & 12 & 2.96658 & 0.290308 & 1.0624 \\
\addlinespace
Ember Monthly & \textbf{CAST} & \textbf{1} & \textbf{0.043419} & 0.0112432 & 0.163191 & \textbf{1} & 0.02491 & \textbf{0.00712896} & 0.118412 \\
 & Persistence & 2 & 0.0558339 & 0.00708276 & 0.125076 & 4 & 0.0680647 & 0.0190364 & 0.207939 \\
 & Comp. ETS & 3 & 0.0594081 & 0.00811643 & 0.133984 & 3 & 0.0592081 & 0.0165153 & 0.189015 \\
 & ilr-VAR & 4 & 0.0768509 & 0.0112401 & 0.163082 & 2 & 0.0422035 & 0.0114587 & 0.157029 \\
 & N-HiTS & 5 & 0.135026 & 0.0300888 & 0.297387 & 5 & 0.0749478 & 0.0200396 & 0.21404 \\
 & DLinear & 6 & 0.185841 & 0.0501369 & 0.353763 & 11 & 0.42838 & 0.0812699 & 0.510123 \\
 & Analog succ. & 7 & 0.196317 & 0.0171863 & 0.181592 & 6 & 0.124148 & 0.0322352 & 0.272416 \\
 & Informer & 8 & 0.243485 & 0.0562127 & 0.430766 & 10 & 0.20515 & 0.0548298 & 0.42204 \\
 & Transformer & 9 & 0.306071 & 0.0480868 & 0.389961 & 7 & 0.202451 & 0.0379548 & 0.343594 \\
 & iTransformer & 10 & 0.329173 & 0.08132 & 0.558565 & 12 & 0.365359 & 0.0994242 & 0.554701 \\
 & GRU & 11 & 0.331481 & 0.0739683 & 0.510132 & 8 & 0.146972 & 0.0419272 & 0.307166 \\
 & TimeMixer & 12 & 0.356405 & 0.0914943 & 0.621615 & 15 & 0.77084 & 0.198903 & 0.973175 \\
 & TiDE & 13 & 0.482048 & 0.107117 & 0.639066 & 14 & 0.859812 & 0.192836 & 0.86541 \\
 & LSTM & 14 & 0.573672 & 0.123553 & 0.693727 & 9 & 0.16772 & 0.0462562 & 0.325333 \\
 & Autoformer & 15 & 0.849209 & 0.219647 & 1.04387 & 16 & 1.05914 & 0.26427 & 1.18726 \\
 & TCN & 16 & 0.860306 & 0.149058 & 0.811679 & 13 & 0.688121 & 0.181629 & 0.918278 \\
\addlinespace
OWID Dietary & \textbf{CAST} & \textbf{1} & \textbf{0.00909632} & 0.00208483 & 0.0725963 & 2 & 0.0251553 & 0.00547772 & 0.132651 \\
 & Comp. ETS & 2 & 0.00974658 & 0.00197552 & 0.0686436 & 3 & 0.0261259 & 0.00548995 & 0.126743 \\
 & Persistence & 3 & 0.00977941 & 0.00199659 & 0.0687337 & \textbf{1} & 0.0258536 & \textbf{0.00544115} & 0.128837 \\
 & ilr-VAR & 4 & 0.0124073 & 0.00259699 & 0.0862119 & 4 & 0.0415185 & 0.00932013 & 0.194097 \\
 & Analog succ. & 5 & 0.0334084 & 0.00579772 & 0.118409 & 6 & 0.113759 & 0.0186083 & 0.234954 \\
 & N-HiTS & 6 & 0.0572862 & 0.0134366 & 0.224671 & 5 & 0.057456 & 0.0146851 & 0.238726 \\
 & Transformer & 7 & 0.149681 & 0.0354405 & 0.376776 & 7 & 0.111226 & 0.0271152 & 0.330619 \\
 & Informer & 8 & 0.162702 & 0.0363493 & 0.374594 & 11 & 0.217676 & 0.0520433 & 0.460199 \\
 & TimeMixer & 9 & 0.186185 & 0.04415 & 0.442642 & 9 & 0.184034 & 0.0432359 & 0.426927 \\
 & TCN & 10 & 0.221173 & 0.0496117 & 0.446061 & 13 & 0.331781 & 0.081625 & 0.603847 \\
 & iTransformer & 11 & 0.225536 & 0.0511349 & 0.459742 & 10 & 0.208341 & 0.0515749 & 0.482573 \\
 & GRU & 12 & 0.253017 & 0.0538751 & 0.466933 & 8 & 0.145166 & 0.0347956 & 0.382494 \\
 & DLinear & 13 & 0.292235 & 0.0791687 & 0.57811 & 16 & 0.663966 & 0.144923 & 0.822368 \\
 & TiDE & 14 & 0.327727 & 0.0725552 & 0.573166 & 12 & 0.331337 & 0.0767271 & 0.602625 \\
 & LSTM & 15 & 0.363798 & 0.0899221 & 0.648624 & 14 & 0.365886 & 0.0915409 & 0.65642 \\
 & Autoformer & 16 & 0.413582 & 0.0969842 & 0.662566 & 15 & 0.520798 & 0.122952 & 0.762143 \\
\addlinespace
CDC Weekly Deaths & \textbf{CAST} & \textbf{1} & \textbf{0.0107881} & 0.00296787 & 0.079106 & 8 & 0.0582988 & 0.0174321 & 0.185183 \\
 & N-HiTS & 2 & 0.0142264 & 0.00379054 & 0.108123 & 5 & 0.0400412 & 0.0117563 & 0.148261 \\
 & ilr-VAR & 3 & 0.0164484 & 0.00259223 & 0.0781426 & 3 & 0.0187595 & 0.00504297 & 0.116092 \\
 & Informer & 4 & 0.0167048 & 0.00462867 & 0.108243 & 4 & 0.0335239 & 0.00959844 & 0.149116 \\
 & Comp. ETS & 5 & 0.0186036 & 0.00265749 & 0.0807235 & 2 & 0.0140918 & 0.00371193 & 0.099118 \\
 & Transformer & 6 & 0.0195876 & 0.00538896 & 0.105769 & 7 & 0.0458788 & 0.0132165 & 0.167135 \\
 & Persistence & 7 & 0.0202088 & 0.00289444 & 0.0862581 & \textbf{1} & 0.0139517 & \textbf{0.00367584} & 0.0981381 \\
 & LSTM & 8 & 0.0244364 & 0.00692017 & 0.121431 & 12 & 0.0807208 & 0.0240117 & 0.228398 \\
 & TCN & 9 & 0.0287298 & 0.00822043 & 0.13316 & 10 & 0.0759255 & 0.022323 & 0.232646 \\
 & GRU & 10 & 0.0391405 & 0.011486 & 0.148457 & 15 & 0.11474 & 0.0343769 & 0.296633 \\
 & Analog succ. & 11 & 0.0453423 & 0.00556252 & 0.114225 & 11 & 0.0808203 & 0.0236295 & 0.226284 \\
 & Autoformer & 12 & 0.04876 & 0.0145005 & 0.169936 & 6 & 0.0443087 & 0.012723 & 0.174658 \\
 & TimeMixer & 13 & 0.053485 & 0.0158688 & 0.177856 & 13 & 0.103348 & 0.0307619 & 0.274461 \\
 & TiDE & 14 & 0.0539142 & 0.0135576 & 0.230729 & 9 & 0.0661668 & 0.0187953 & 0.22678 \\
 & DLinear & 15 & 0.0600882 & 0.01594 & 0.250935 & 16 & 0.887797 & 0.138453 & 0.850887 \\
 & iTransformer & 16 & 0.0685253 & 0.0202776 & 0.220506 & 14 & 0.110236 & 0.032842 & 0.288916 \\
\addlinespace
BLS QCEW & \textbf{CAST} & \textbf{1} & \textbf{0.000877299} & 0.000210913 & 0.0161524 & \textbf{1} & 0.00101067 & \textbf{0.000245855} & 0.0252409 \\
 & Persistence & 2 & 0.00124199 & 0.000217116 & 0.0148815 & 2 & 0.00101154 & 0.000246087 & 0.0252444 \\
 & Comp. ETS & 3 & 0.00153748 & 0.000287053 & 0.0170236 & 5 & 0.00766992 & 0.00126986 & 0.0322875 \\
 & ilr-VAR & 4 & 0.0017315 & 0.00036926 & 0.02112 & 3 & 0.00131132 & 0.000321105 & 0.0280655 \\
 & N-HiTS & 5 & 0.00223366 & 0.00055218 & 0.0310259 & 6 & 0.00705517 & 0.00146595 & 0.0393494 \\
 & Analog succ. & 6 & 0.00500186 & 0.00122302 & 0.0229677 & 4 & 0.00142995 & 0.000362673 & 0.0260285 \\
 & LSTM & 7 & 0.0184843 & 0.00476039 & 0.111191 & 7 & 0.016034 & 0.00400536 & 0.106318 \\
 & GRU & 8 & 0.0217669 & 0.00540283 & 0.113477 & 8 & 0.0221345 & 0.00559466 & 0.119954 \\
 & TCN & 9 & 0.0246176 & 0.00612776 & 0.130935 & 12 & 0.0485267 & 0.0107716 & 0.180417 \\
 & Autoformer & 10 & 0.0255385 & 0.00648846 & 0.126957 & 13 & 0.0435019 & 0.0110232 & 0.198157 \\
 & Informer & 11 & 0.0261033 & 0.00678146 & 0.121539 & 9 & 0.0279846 & 0.00702364 & 0.141428 \\
 & TiDE & 12 & 0.0279086 & 0.00702397 & 0.148591 & 10 & 0.0328629 & 0.00797709 & 0.158621 \\
 & iTransformer & 13 & 0.0428377 & 0.0110241 & 0.204422 & 14 & 0.0536858 & 0.0135799 & 0.202021 \\
 & TimeMixer & 14 & 0.0675995 & 0.0163482 & 0.212513 & 15 & 0.0652305 & 0.0167058 & 0.220618 \\
 & DLinear & 15 & 0.0889357 & 0.0241089 & 0.293117 & 16 & 0.0965313 & 0.0256036 & 0.315455 \\
 & Transformer & 16 & 0.197784 & 0.0512437 & 0.52214 & 11 & 0.0413023 & 0.0104851 & 0.184597 \\
\addlinespace
EPA AirData AQI & N-HiTS & \textbf{1} & \textbf{0.0951827} & 0.0254094 & 0.261104 & 6 & 0.21827 & 0.063517 & 0.439714 \\
 & \textbf{CAST} & 2 & 0.118364 & 0.0329105 & 0.303775 & \textbf{1} & 0.133078 & \textbf{0.0374855} & 0.333589 \\
 & Transformer & 3 & 0.118412 & 0.0283931 & 0.276533 & 2 & 0.181052 & 0.0446549 & 0.366627 \\
 & GRU & 4 & 0.139433 & 0.0323235 & 0.300142 & 13 & 0.400356 & 0.100397 & 0.646791 \\
 & iTransformer & 5 & 0.165264 & 0.0462373 & 0.384421 & 9 & 0.279152 & 0.0743963 & 0.543373 \\
 & Informer & 6 & 0.167172 & 0.0376505 & 0.328333 & 3 & 0.228875 & 0.0619752 & 0.493508 \\
 & Autoformer & 7 & 0.20381 & 0.0576035 & 0.443592 & 15 & 0.422042 & 0.108073 & 0.711935 \\
 & TiDE & 8 & 0.209028 & 0.0520525 & 0.407176 & 10 & 0.305901 & 0.0777592 & 0.54205 \\
 & TCN & 9 & 0.214709 & 0.0431351 & 0.352139 & 8 & 0.282052 & 0.0734942 & 0.529449 \\
 & DLinear & 10 & 0.254184 & 0.0621291 & 0.371073 & 12 & 0.477053 & 0.0931641 & 0.505303 \\
 & TimeMixer & 11 & 0.272136 & 0.0783938 & 0.537543 & 14 & 0.417715 & 0.10739 & 0.650099 \\
 & Persistence & 12 & 0.331506 & 0.0317229 & 0.290943 & 5 & 1.15134 & 0.0629187 & 0.416975 \\
 & Comp. ETS & 13 & 0.332621 & 0.0323935 & 0.2894 & 4 & 1.0303 & 0.0627821 & 0.409021 \\
 & ilr-VAR & 14 & 0.333877 & 0.0364723 & 0.303371 & 11 & 0.720614 & 0.0899845 & 0.538254 \\
 & LSTM & 15 & 0.337118 & 0.0652318 & 0.440824 & 7 & 0.294233 & 0.0726406 & 0.529403 \\
 & Analog succ. & 16 & 0.499962 & 0.0460878 & 0.354127 & 16 & 2.05277 & 0.143879 & 0.718139 \\
\addlinespace
NOAA Storm Events & N-HiTS & \textbf{1} & \textbf{0.804972} & 0.190844 & 0.884904 & 4 & 1.06746 & 0.250997 & 1.08331 \\
 & \textbf{CAST} & 2 & 1.0042 & 0.240966 & 1.03659 & \textbf{1} & 0.877943 & \textbf{0.204162} & 0.936352 \\
 & TCN & 3 & 1.17051 & 0.28466 & 1.17446 & 7 & 1.36934 & 0.27765 & 1.16312 \\
 & Informer & 4 & 1.18189 & 0.241927 & 1.0171 & 8 & 1.48019 & 0.281181 & 1.14625 \\
 & TimeMixer & 5 & 1.18788 & 0.284153 & 1.16783 & 10 & 1.3886 & 0.304777 & 1.22844 \\
 & Transformer & 6 & 1.25203 & 0.268041 & 1.09856 & 5 & 1.27068 & 0.274235 & 1.13487 \\
 & GRU & 7 & 1.2706 & 0.259499 & 1.07998 & 3 & 1.18076 & 0.239621 & 1.03937 \\
 & iTransformer & 8 & 1.27503 & 0.307094 & 1.23358 & 9 & 1.22876 & 0.296223 & 1.216 \\
 & DLinear & 9 & 1.33342 & 0.282046 & 1.11826 & 13 & 1.71357 & 0.344454 & 1.30853 \\
 & TiDE & 10 & 1.44887 & 0.297881 & 1.17492 & 11 & 1.62198 & 0.314353 & 1.23061 \\
 & LSTM & 11 & 1.4822 & 0.270865 & 1.10645 & 2 & 1.05991 & 0.2161 & 0.970997 \\
 & Autoformer & 12 & 1.92024 & 0.385258 & 1.41546 & 14 & 1.98512 & 0.374555 & 1.39473 \\
 & ilr-VAR & 13 & 2.73715 & 0.246669 & 1.02504 & 6 & 2.83251 & 0.276656 & 1.10898 \\
 & Comp. ETS & 14 & 3.30315 & 0.332319 & 1.23383 & 12 & 3.26815 & 0.341975 & 1.28594 \\
 & Persistence & 15 & 4.3975 & 0.247541 & 0.999917 & 15 & 8.12481 & 0.396987 & 1.39809 \\
 & Analog succ. & 16 & 7.93023 & 0.388703 & 1.35276 & 16 & 8.4979 & 0.413657 & 1.43581 \\
\addlinespace
NYC TLC Trip & \textbf{CAST} & \textbf{1} & \textbf{0.0216781} & 0.00533544 & 0.123575 & \textbf{1} & 0.0392898 & \textbf{0.00962498} & 0.171443 \\
 & N-HiTS & 2 & 0.027418 & 0.00685197 & 0.143953 & 3 & 0.0481726 & 0.0120206 & 0.193679 \\
 & GRU & 3 & 0.0374347 & 0.0093467 & 0.174971 & 8 & 0.0751246 & 0.0178017 & 0.250104 \\
 & Informer & 4 & 0.037457 & 0.00834764 & 0.158408 & 7 & 0.0754962 & 0.0158038 & 0.231958 \\
 & Comp. ETS & 5 & 0.0394307 & 0.00614814 & 0.126547 & 2 & 0.080051 & 0.0116937 & 0.182876 \\
 & Transformer & 6 & 0.0430201 & 0.0110779 & 0.205135 & 6 & 0.0624697 & 0.0151754 & 0.234303 \\
 & TimeMixer & 7 & 0.0453191 & 0.0111672 & 0.203309 & 5 & 0.0615421 & 0.0150163 & 0.222775 \\
 & Persistence & 8 & 0.0461533 & 0.00713079 & 0.138658 & 4 & 0.0968018 & 0.0140436 & 0.210845 \\
 & TCN & 9 & 0.0500653 & 0.012201 & 0.210939 & 10 & 0.171072 & 0.0382772 & 0.380874 \\
 & Autoformer & 10 & 0.132937 & 0.0321771 & 0.351482 & 14 & 5.29464 & 0.313007 & 1.15443 \\
 & iTransformer & 11 & 0.140185 & 0.0325128 & 0.362233 & 9 & 0.0808156 & 0.0208806 & 0.250952 \\
 & TiDE & 12 & 0.141825 & 0.0352235 & 0.424395 & 11 & 0.23163 & 0.0530208 & 0.480762 \\
 & LSTM & 13 & 0.149013 & 0.0376267 & 0.401264 & 12 & 0.263217 & 0.0647972 & 0.558567 \\
 & DLinear & 14 & 1.00539 & 0.247281 & 1.1709 & 13 & 1.4417 & 0.268382 & 1.21295 \\
 & Analog succ. & 15 & 4.02082 & 0.542739 & 1.77242 & 15 & 3.87917 & 0.566316 & 1.81045 \\
 & ilr-VAR & 16 & 13.6058 & 0.691612 & 1.99869 & 16 & 14.6028 & 0.692487 & 1.99953 \\
\addlinespace
Queue Homogeneous & ilr-VAR & \textbf{1} & \textbf{0.0146575} & 0.00257377 & 0.0700608 & 10 & 7.14156 & 0.325373 & 1.04609 \\
 & \textbf{CAST} & 2 & 0.0157613 & 0.00364617 & 0.0543488 & \textbf{1} & 1.32163 & \textbf{0.0732326} & 0.286115 \\
 & Comp. ETS & 3 & 0.0175075 & 0.00326471 & 0.0560934 & 6 & 1.93932 & 0.182566 & 0.673514 \\
 & Persistence & 4 & 0.0202404 & 0.00419852 & 0.0612677 & 2 & 1.5297 & 0.0887025 & 0.336183 \\
 & iTransformer & 5 & 0.0364966 & 0.00796403 & 0.155832 & 7 & 1.78061 & 0.216013 & 0.935441 \\
 & Informer & 6 & 0.0783006 & 0.018033 & 0.220333 & 8 & 1.61026 & 0.24709 & 1.04285 \\
 & TCN & 7 & 0.0815257 & 0.0185813 & 0.221934 & 12 & 3.57281 & 0.336199 & 1.19171 \\
 & TimeMixer & 8 & 0.119072 & 0.0208954 & 0.220727 & 14 & 2.43904 & 0.353977 & 1.30293 \\
 & N-HiTS & 9 & 0.15382 & 0.027131 & 0.195042 & 5 & 1.13843 & 0.173467 & 0.782484 \\
 & TiDE & 10 & 0.190298 & 0.0428669 & 0.312958 & 9 & 2.61056 & 0.317366 & 1.17196 \\
 & Transformer & 11 & 0.191079 & 0.0354214 & 0.272254 & 4 & 1.39686 & 0.16314 & 0.688111 \\
 & GRU & 12 & 0.199407 & 0.0390967 & 0.311613 & 3 & 1.06916 & 0.159669 & 0.74346 \\
 & LSTM & 13 & 0.3267 & 0.0494002 & 0.358521 & 13 & 3.40247 & 0.351077 & 1.19764 \\
 & Analog succ. & 14 & 0.481032 & 0.103656 & 0.400823 & 11 & 4.63087 & 0.330393 & 1.0851 \\
 & Autoformer & 15 & 0.609488 & 0.113984 & 0.664258 & 16 & 11.2235 & 0.483502 & 1.58416 \\
 & DLinear & 16 & 1.69633 & 0.329345 & 1.20749 & 15 & 2.31494 & 0.405561 & 1.39143 \\
\addlinespace
Queue Nonhomogeneous & \textbf{CAST} & \textbf{1} & \textbf{0.0710114} & 0.00666879 & 0.0851155 & \textbf{1} & 0.204651 & \textbf{0.0385748} & 0.318583 \\
 & Persistence & 2 & 0.0795933 & 0.00630115 & 0.0799253 & 2 & 0.349448 & 0.0552407 & 0.374623 \\
 & Comp. ETS & 3 & 0.0823439 & 0.00699556 & 0.0812887 & 3 & 0.405678 & 0.0692842 & 0.425503 \\
 & ilr-VAR & 4 & 0.112021 & 0.00908139 & 0.102818 & 9 & 5.18929 & 0.305106 & 1.02409 \\
 & iTransformer & 5 & 0.141987 & 0.0357759 & 0.309032 & 8 & 1.51932 & 0.298196 & 1.20873 \\
 & Informer & 6 & 0.14358 & 0.0315839 & 0.2856 & 5 & 1.5097 & 0.219169 & 0.972414 \\
 & GRU & 7 & 0.177276 & 0.0374576 & 0.289299 & 4 & 0.630546 & 0.134544 & 0.726834 \\
 & TCN & 8 & 0.211043 & 0.043954 & 0.332068 & 16 & 9.45437 & 0.508572 & 1.62843 \\
 & TiDE & 9 & 0.219794 & 0.0469991 & 0.356809 & 7 & 1.31548 & 0.228863 & 0.988286 \\
 & TimeMixer & 10 & 0.220335 & 0.0407757 & 0.332287 & 15 & 2.91359 & 0.424839 & 1.50168 \\
 & N-HiTS & 11 & 0.264766 & 0.0422817 & 0.302265 & 11 & 1.86893 & 0.354042 & 1.34894 \\
 & Transformer & 12 & 0.328068 & 0.0504815 & 0.370057 & 10 & 2.62586 & 0.322617 & 1.25643 \\
 & Analog succ. & 13 & 0.543859 & 0.114609 & 0.476953 & 12 & 5.31117 & 0.376462 & 1.27208 \\
 & Autoformer & 14 & 1.02563 & 0.177012 & 0.789748 & 14 & 6.00347 & 0.413609 & 1.40729 \\
 & LSTM & 15 & 1.36191 & 0.24386 & 1.00272 & 6 & 1.2865 & 0.221353 & 0.970517 \\
 & DLinear & 16 & 1.45636 & 0.311163 & 1.17229 & 13 & 3.16441 & 0.394726 & 1.39268 \\
\addlinespace
Queue Combined & \textbf{CAST} & \textbf{1} & \textbf{0.00751856} & 0.00183513 & 0.064725 & \textbf{1} & 1.12658 & \textbf{0.0810041} & 0.401138 \\
 & Persistence & 2 & 0.0102985 & 0.00209581 & 0.069828 & 3 & 1.3917 & 0.0952724 & 0.442644 \\
 & Comp. ETS & 3 & 0.0106556 & 0.00217919 & 0.0695206 & 2 & 1.33272 & 0.0882713 & 0.428818 \\
 & ilr-VAR & 4 & 0.0141545 & 0.00297808 & 0.0888672 & 16 & 11.4434 & 0.501139 & 1.50207 \\
 & iTransformer & 5 & 0.10614 & 0.0266371 & 0.285218 & 10 & 2.69081 & 0.347132 & 1.2959 \\
 & TCN & 6 & 0.139265 & 0.0323932 & 0.291559 & 15 & 5.21619 & 0.453163 & 1.53838 \\
 & TimeMixer & 7 & 0.153117 & 0.0307791 & 0.279044 & 9 & 1.9972 & 0.284695 & 1.14236 \\
 & Informer & 8 & 0.163512 & 0.0359912 & 0.324176 & 5 & 1.47397 & 0.179185 & 0.821501 \\
 & TiDE & 9 & 0.267956 & 0.054411 & 0.396357 & 12 & 2.37105 & 0.385379 & 1.35612 \\
 & GRU & 10 & 0.280635 & 0.0447805 & 0.326054 & 8 & 1.80887 & 0.238433 & 0.95875 \\
 & LSTM & 11 & 0.314297 & 0.0595657 & 0.370144 & 6 & 1.6557 & 0.220644 & 0.951859 \\
 & Transformer & 12 & 0.417396 & 0.0625187 & 0.411972 & 4 & 1.72367 & 0.155291 & 0.657375 \\
 & Analog succ. & 13 & 0.508904 & 0.116507 & 0.468033 & 14 & 5.39012 & 0.402995 & 1.31472 \\
 & N-HiTS & 14 & 0.535516 & 0.0386793 & 0.252346 & 7 & 1.53283 & 0.237438 & 1.02648 \\
 & Autoformer & 15 & 1.00371 & 0.149492 & 0.736358 & 13 & 6.58272 & 0.390247 & 1.35194 \\
 & DLinear & 16 & 1.62752 & 0.344405 & 1.26957 & 11 & 2.71501 & 0.364278 & 1.31315 \\
\bottomrule
\end{longtable}
\endgroup

\label{app:ablation-full}
\begingroup
\scriptsize
\setlength{\tabcolsep}{3.0pt}
\begin{longtable}{@{}llrrrr@{}}
\caption{Full CAST component-study metrics by benchmark section. Deltas are variant/CAST$-1$ on the same section; negative values mean the variant has lower error than CAST.}\label{tab:appendix-ablation-full-metrics}\\
\toprule
Dataset & Variant & Offline KL & KL $\Delta$ & Rollout JSD & Rollout $\Delta$ \\
\midrule
\endfirsthead
\caption[]{Full CAST component-study metrics by benchmark section (continued).}\\
\toprule
Dataset & Variant & Offline KL & KL $\Delta$ & Rollout JSD & Rollout $\Delta$ \\
\midrule
\endhead
BioTIME & \textbf{CAST} & \textbf{0.131296} & -- & \textbf{0.0323185} & -- \\
 & w/o structural reg. & 0.131296 & +0.0\% & 0.0323185 & +0.0\% \\
 & anchor only & 0.131296 & +0.0\% & 0.0323185 & +0.0\% \\
 & single-head retrieval & 0.140599 & +7.1\% & 0.0309276 & -4.3\% \\
 & fixed local kernel & 0.131296 & +0.0\% & 0.0323185 & +0.0\% \\
 & w/o persistence mix & 0.146632 & +11.7\% & 0.0280559 & -13.2\% \\
\addlinespace
Ember Monthly & \textbf{CAST} & \textbf{0.043419} & -- & \textbf{0.00712895} & -- \\
 & w/o structural reg. & 0.043419 & +0.0\% & 0.00712895 & +0.0\% \\
 & anchor only & 0.043419 & +0.0\% & 0.00712895 & +0.0\% \\
 & single-head retrieval & 0.0300788 & -30.7\% & 0.00758012 & +6.3\% \\
 & fixed local kernel & 0.043419 & +0.0\% & 0.00712895 & +0.0\% \\
 & w/o persistence mix & 0.133281 & +207.0\% & 0.0138151 & +93.8\% \\
\addlinespace
OWID Dietary & \textbf{CAST} & \textbf{0.00909632} & -- & \textbf{0.00547772} & -- \\
 & w/o structural reg. & 0.00909632 & +0.0\% & 0.00547772 & +0.0\% \\
 & anchor only & 0.00909632 & +0.0\% & 0.00547772 & +0.0\% \\
 & single-head retrieval & 0.0138495 & +52.3\% & 0.00686305 & +25.3\% \\
 & fixed local kernel & 0.00909632 & +0.0\% & 0.00547772 & +0.0\% \\
 & w/o persistence mix & 0.00868301 & -4.5\% & 0.00524196 & -4.3\% \\
\addlinespace
CDC Weekly Deaths & \textbf{CAST} & \textbf{0.0107881} & -- & \textbf{0.0174321} & -- \\
 & w/o structural reg. & 0.0107881 & +0.0\% & 0.0174321 & +0.0\% \\
 & anchor only & 0.0107881 & +0.0\% & 0.0174321 & +0.0\% \\
 & single-head retrieval & 0.00898956 & -16.7\% & 0.0171674 & -1.5\% \\
 & fixed local kernel & 0.0107881 & +0.0\% & 0.0174321 & +0.0\% \\
 & w/o persistence mix & 0.00968232 & -10.2\% & 0.0181817 & +4.3\% \\
\addlinespace
BLS QCEW & \textbf{CAST} & \textbf{0.000877299} & -- & \textbf{0.000245855} & -- \\
 & w/o structural reg. & 0.000877299 & +0.0\% & 0.000245855 & +0.0\% \\
 & anchor only & 0.000877299 & +0.0\% & 0.000245855 & +0.0\% \\
 & single-head retrieval & 0.000742832 & -15.3\% & 0.00024583 & +0.0\% \\
 & fixed local kernel & 0.000877299 & +0.0\% & 0.000245855 & +0.0\% \\
 & w/o persistence mix & 0.00100521 & +14.6\% & 0.000243454 & -1.0\% \\
\addlinespace
EPA AirData AQI & \textbf{CAST} & \textbf{0.118364} & -- & \textbf{0.0374855} & -- \\
 & w/o structural reg. & 0.114322 & -3.4\% & 0.0370734 & -1.1\% \\
 & anchor only & 0.111513 & -5.8\% & 0.0359284 & -4.2\% \\
 & single-head retrieval & 0.11995 & +1.3\% & 0.0398978 & +6.4\% \\
 & fixed local kernel & 0.114509 & -3.3\% & 0.0380775 & +1.6\% \\
 & w/o persistence mix & 0.152539 & +28.9\% & 0.0377938 & +0.8\% \\
\addlinespace
NOAA Storm Events & \textbf{CAST} & \textbf{1.0042} & -- & \textbf{0.204162} & -- \\
 & w/o structural reg. & 1.00412 & +0.0\% & 0.204198 & +0.0\% \\
 & anchor only & 1.00412 & +0.0\% & 0.204198 & +0.0\% \\
 & single-head retrieval & 1.00397 & +0.0\% & 0.208365 & +2.1\% \\
 & fixed local kernel & 1.0042 & +0.0\% & 0.204162 & +0.0\% \\
 & w/o persistence mix & 1.14998 & +14.5\% & 0.209803 & +2.8\% \\
\addlinespace
NYC TLC Trip & \textbf{CAST} & \textbf{0.0216781} & -- & \textbf{0.00962498} & -- \\
 & w/o structural reg. & 0.0216781 & +0.0\% & 0.00962498 & +0.0\% \\
 & anchor only & 0.0216781 & +0.0\% & 0.00962498 & +0.0\% \\
 & single-head retrieval & 0.0217785 & +0.5\% & 0.00985627 & +2.4\% \\
 & fixed local kernel & 0.0216781 & +0.0\% & 0.00962498 & +0.0\% \\
 & w/o persistence mix & 0.0239812 & +10.6\% & 0.0096677 & +0.4\% \\
\addlinespace
Queue Homogeneous & \textbf{CAST} & \textbf{0.0157613} & -- & \textbf{0.0732326} & -- \\
 & w/o structural reg. & 0.0158458 & +0.5\% & 0.0730351 & -0.3\% \\
 & anchor only & 0.0185469 & +17.7\% & 0.0886507 & +21.1\% \\
 & single-head retrieval & 0.01584 & +0.5\% & 0.0746717 & +2.0\% \\
 & fixed local kernel & 0.0160649 & +1.9\% & 0.0712605 & -2.7\% \\
 & w/o persistence mix & 0.0157966 & +0.2\% & 0.0703303 & -4.0\% \\
\addlinespace
Queue Nonhomogeneous & \textbf{CAST} & \textbf{0.0710114} & -- & \textbf{0.0385748} & -- \\
 & w/o structural reg. & 0.0736161 & +3.7\% & 0.0554675 & +43.8\% \\
 & anchor only & 0.0814837 & +14.7\% & 0.0474633 & +23.0\% \\
 & single-head retrieval & 0.0694002 & -2.3\% & 0.0548115 & +42.1\% \\
 & fixed local kernel & 0.0694392 & -2.2\% & 0.124439 & +222.6\% \\
 & w/o persistence mix & 0.0865577 & +21.9\% & 0.0511655 & +32.6\% \\
\addlinespace
Queue Combined & \textbf{CAST} & \textbf{0.00751856} & -- & \textbf{0.0810041} & -- \\
 & w/o structural reg. & 0.00903069 & +20.1\% & 0.109586 & +35.3\% \\
 & anchor only & 0.010207 & +35.8\% & 0.0923865 & +14.1\% \\
 & single-head retrieval & 0.00826227 & +9.9\% & 0.0899711 & +11.1\% \\
 & fixed local kernel & 0.00837703 & +11.4\% & 0.109043 & +34.6\% \\
 & w/o persistence mix & 0.133507 & +1675.7\% & 0.136629 & +68.7\% \\
\bottomrule
\end{longtable}
\endgroup

\begin{table*}[t]
\centering
\caption{Dataset-level best component-control comparison against CAST. The selected component variant is the lowest-error non-CAST CAST control for that metric; negative deltas mean the component control is lower than CAST.}
\label{tab:appendix-ablation-best-variant-deltas}
\scriptsize
\setlength{\tabcolsep}{3.0pt}
\begin{tabular}{@{}lrrrlrrr@{}}
\toprule
Dataset & CAST KL & Best variant KL & KL $\Delta$ & KL variant & CAST RO & Best variant RO & RO $\Delta$ \\
\midrule
BioTIME & 0.131296 & 0.131296 & +0.0\% & w/o structural reg. & 0.0323185 & 0.0280559 & -13.2\% \\
Ember Monthly & 0.043419 & 0.0300788 & -30.7\% & single-head retrieval & 0.00712896 & 0.00712896 & +0.0\% \\
OWID Dietary & 0.00909632 & 0.00868301 & -4.5\% & w/o persistence mix & 0.00547772 & 0.00524196 & -4.3\% \\
CDC Weekly Deaths & 0.0107881 & 0.00898956 & -16.7\% & single-head retrieval & 0.0174321 & 0.0171674 & -1.5\% \\
BLS QCEW & 0.000877299 & 0.000742832 & -15.3\% & single-head retrieval & 0.000245855 & 0.000243454 & -1.0\% \\
EPA AirData AQI & 0.118364 & 0.111513 & -5.8\% & anchor only & 0.0374855 & 0.0359284 & -4.2\% \\
NOAA Storm Events & 1.0042 & 1.00397 & +0.0\% & single-head retrieval & 0.204162 & 0.204162 & +0.0\% \\
NYC TLC Trip & 0.0216781 & 0.0216781 & +0.0\% & w/o structural reg. & 0.00962498 & 0.00962498 & +0.0\% \\
Queue Homogeneous & 0.0157613 & 0.0157966 & +0.2\% & w/o persistence mix & 0.0732326 & 0.0703303 & -4.0\% \\
Queue Nonhomogeneous & 0.0710114 & 0.0694002 & -2.3\% & single-head retrieval & 0.0385748 & 0.0474633 & +23.0\% \\
Queue Combined & 0.00751856 & 0.00826227 & +9.9\% & single-head retrieval & 0.0810041 & 0.0899711 & +11.1\% \\
\bottomrule
\end{tabular}
\end{table*}

\section{CAST Seed Robustness}
\label{app:cast-seed-robustness}

The main benchmark table is a broad single-seed sweep. As a targeted sensitivity check, we rerun CAST over five seeds on the public non-queue sections and compare the CAST seed interval with the immediately adjacent baselines in Table~\ref{tab:appendix-full-benchmark-metrics}.

\begin{table*}[t]
\centering
\caption{Five-seed CAST robustness on the non-queue datasets. Values are mean $\pm$ sample standard deviation over five CAST runs, combining the benchmark run together with robustness seeds 42--45. Lower is better for all metrics. EPA and NOAA offline values use the same rolling-window offline evaluation protocol as the benchmark tables.}
\label{tab:appendix-cast-seed-robustness}
\scriptsize
\setlength{\tabcolsep}{2.2pt}
\resizebox{\textwidth}{!}{%
\begin{tabular}{@{}lrrrr@{}}
\toprule
Dataset & Offline KL & Offline JSD & Rollout KL & Rollout JSD \\
\midrule
BioTIME & $0.129846931 \pm 0.001307300$ & $0.028489747 \pm 0.000339945$ & $0.115820839 \pm 0.013476657$ & $0.029042774 \pm 0.003360500$ \\
Ember Monthly & $0.040094380 \pm 0.006953061$ & $0.010433575 \pm 0.001772240$ & $0.028612465 \pm 0.007330173$ & $0.007923460 \pm 0.001567363$ \\
OWID Dietary & $0.016360002 \pm 0.009978004$ & $0.003982110 \pm 0.002597886$ & $0.036616299 \pm 0.014838105$ & $0.008544794 \pm 0.003986897$ \\
CDC Weekly Deaths & $0.009233608 \pm 0.000985212$ & $0.002518346 \pm 0.000282897$ & $0.062832310 \pm 0.009995790$ & $0.018819080 \pm 0.003049933$ \\
BLS QCEW & $0.000838914 \pm 0.000029311$ & $0.000204769 \pm 0.000003499$ & $0.001010642 \pm 0.000000031$ & $0.000245855 \pm 0.000000003$ \\
EPA AirData AQI & $0.112830553 \pm 0.005006539$ & $0.031074019 \pm 0.001461186$ & $0.134789750 \pm 0.007979795$ & $0.037963682 \pm 0.002572998$ \\
NOAA Storm Events & $0.996603588 \pm 0.031013309$ & $0.237163796 \pm 0.006285711$ & $0.894662467 \pm 0.023715547$ & $0.208050197 \pm 0.004317885$ \\
NYC TLC Trip & $0.021926485 \pm 0.000213208$ & $0.005383274 \pm 0.000042496$ & $0.039566407 \pm 0.000288092$ & $0.009684717 \pm 0.000076063$ \\
\bottomrule
\end{tabular}%
}
\end{table*}

\begin{figure}[t]
    \centering
    \includegraphics[width=\linewidth]{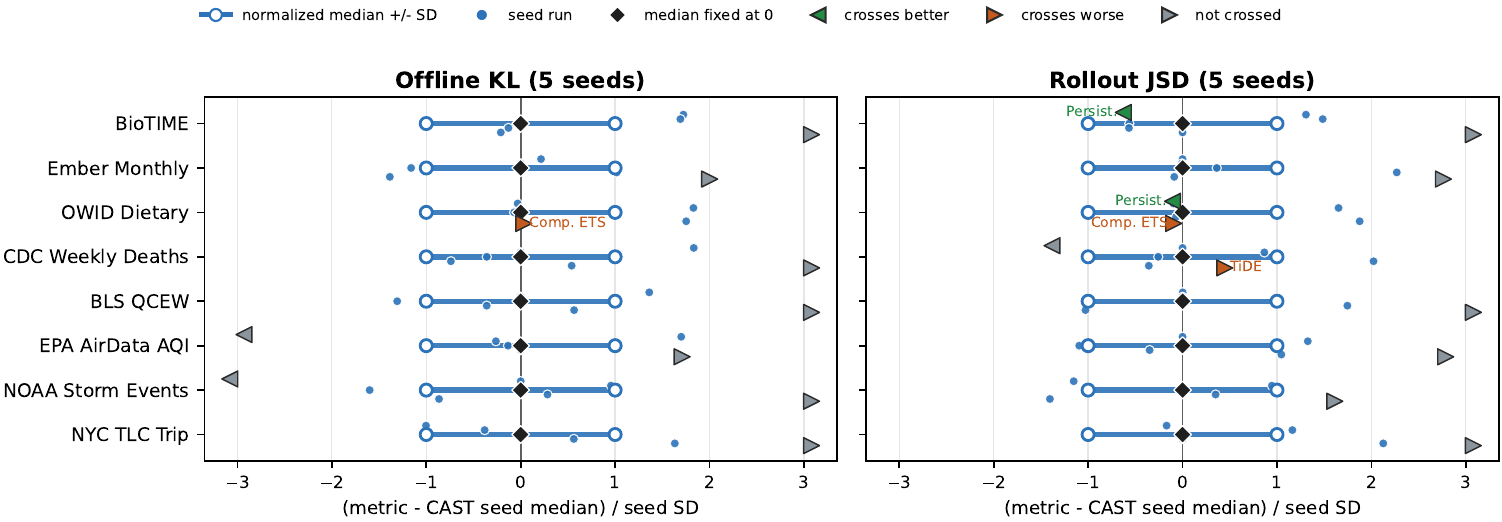}
    \caption{CAST random-seed robustness over five seeds (benchmark seed plus robustness seeds 42--45). Each benchmark row is normalized by CAST seed variability: the horizontal axis is $(\mathrm{metric}-\text{CAST seed median})/\text{CAST seed SD}$, so the CAST interval is fixed at $[-1,1]$ and the seed median is fixed at zero. Triangles mark the relative positions of the immediately better- or worse-ranked appendix-table neighbors (Table~\ref{tab:appendix-full-benchmark-metrics}); green triangles indicate favorable crossings of better-ranked neighbors, orange triangles indicate crossings of worse-ranked neighbors. Baselines beyond the plotted range are clipped at the edge. In the CAST-winning cases the five-seed CAST interval almost always remains separated from the immediately worse baseline; the only unstable exception is OWID Dietary, where the leading methods have very small absolute gaps and the CAST seed spread overlaps its neighbor.}
    \label{fig:cast-seed-robustness-neighbor-overlap}
\end{figure}

\section{Empirical Diagnostics for the Aliasing Theory}
\label{app:theory-diagnostics}

Section~\ref{sec:theory} gives a structural lower bound for fixed-summary forecasters under latent-kernel aliasing. This appendix reports a per-benchmark diagnostic that checks whether the condition behind the theory---similar present distributions but different next-step distributions, under different causal histories---appears in the data.

\subsection{Diagnostic protocol}

We sample states from each benchmark section and, for each sampled state, find cross-sequence neighbors by current distribution; we then measure how close the matched current distributions are (current-neighbor JSD), how different their next-step successors are (successor JSD), and whether a short causal history descriptor gives a closer successor match than the current state alone (history-better rate). For queue variants, the neighbor search uses ordered-support moment and quantile descriptors to propose current-state matches, and we then report actual padded-distribution JSD for the selected pairs. This is a diagnostic proxy rather than proof of the theorem's assumptions: cross-sequence nearest neighbors reveal ambiguity in the map from current distribution to successor, whereas CAST's actual retrieval memory is causal within a sequence; the diagnostic supports the claim that the benchmark contains the same structural failure mode targeted by the theory, not that every ambiguous cross-sequence pair is literally retrieved at test time.

\subsection{Dataset-level summary}

Using a consistent threshold on the neighbor and successor JSD statistics, the benchmark sections partition as follows:

\begin{itemize}
\item \textbf{Strong evidence:} EPA AirData AQI, NOAA Storm Events, Queue Nonhomogeneous, Queue Combined.
\item \textbf{Moderate evidence:} BioTIME, Ember Monthly, CDC Weekly Deaths.
\item \textbf{Weak evidence:} BLS QCEW, NYC TLC Trip, OWID Dietary, Queue Homogeneous.
\end{itemize}

This pattern is consistent with the result pattern of Section~\ref{sec:results}: CAST's rollout-JSD advantage is largest exactly on the strong-evidence sections (top-1 rollout JSD on all four, with $17$--$30\%$ rollout JSD gaps over the closest non-CAST baseline on the two queue sections), moderate on the moderate-evidence sections, and modest or absent on the weak-evidence sections where simple persistence or smooth classical dynamics remain competitive.

\subsection{Queue-specific interpretation}

Queue data make the aliasing picture concrete because the observed state is only the occupancy distribution $\mathbb P(N_t=k)$: arrival intensity, service configuration, utilization, and nonstationary phase are side information or latent context from the model's perspective. The generation protocol of \citet{di2025transformerqueue} uses stationary $G/G/1$ for the homogeneous slice and sinusoidal arrival-side variation for $G_t/G/1$ nonhomogeneous systems; replication averages produce the observed empirical occupancy distributions. The diagnostic supports the following nuanced reading:

\begin{center}
\scriptsize
\setlength{\tabcolsep}{3.0pt}
\resizebox{\linewidth}{!}{%
\begin{tabular}{@{}lrrrrl@{}}
\toprule
Queue slice & Current-nbr JSD med.\ & Successor JSD med.\ & Successor JSD q90 & History-better rate & Interpretation \\
\midrule
Homogeneous & 0.00075 & 0.00085 & 0.00414 & 30.7\% & mostly smooth, weak aliasing \\
Nonhomogeneous & 0.00130 & 0.00264 & 0.01061 & 46.1\% & clearest queue aliasing \\
Combined & 0.00111 & 0.00202 & 0.00851 & 40.6\% & mixed but meaningful ambiguity \\
\bottomrule
\end{tabular}%
}
\end{center}

The nonhomogeneous and combined queue slices are the best empirical matches to the aliasing theory: they often contain very close current occupancy distributions whose successors differ more than the typical one-step drift---the setting where a fixed current-state baseline must average futures and a history-conditioned successor-local operator can do better. The homogeneous slice is different: nearest current matches usually have similar successors, so CAST's queue-homogeneous advantage is better framed as rollout stability and ordered local transport rather than as strong aliasing disambiguation.

\subsection{Summary}

Distribution-valued time series often revisit similar simplex states under different causal contexts. In such cases fixed-summary classical forecasters must collapse distinct successors into a single prediction; CAST avoids this collapse by indexing empirical successors with a causal representation and by applying bounded local transport on ordered supports. The theorem suite identifies a recurring data regime: the same observed simplex state appears under different causal contexts, but the next distribution depends on the context. The diagnostic shows this structure especially clearly in EPA, NOAA, and the nonhomogeneous/combined queue slices, matching the main result pattern.

\section{Additional Qualitative Visualizations}
\label{app:additional-visualizations}

\begin{figure}[t]
\centering
\includegraphics[width=\linewidth]{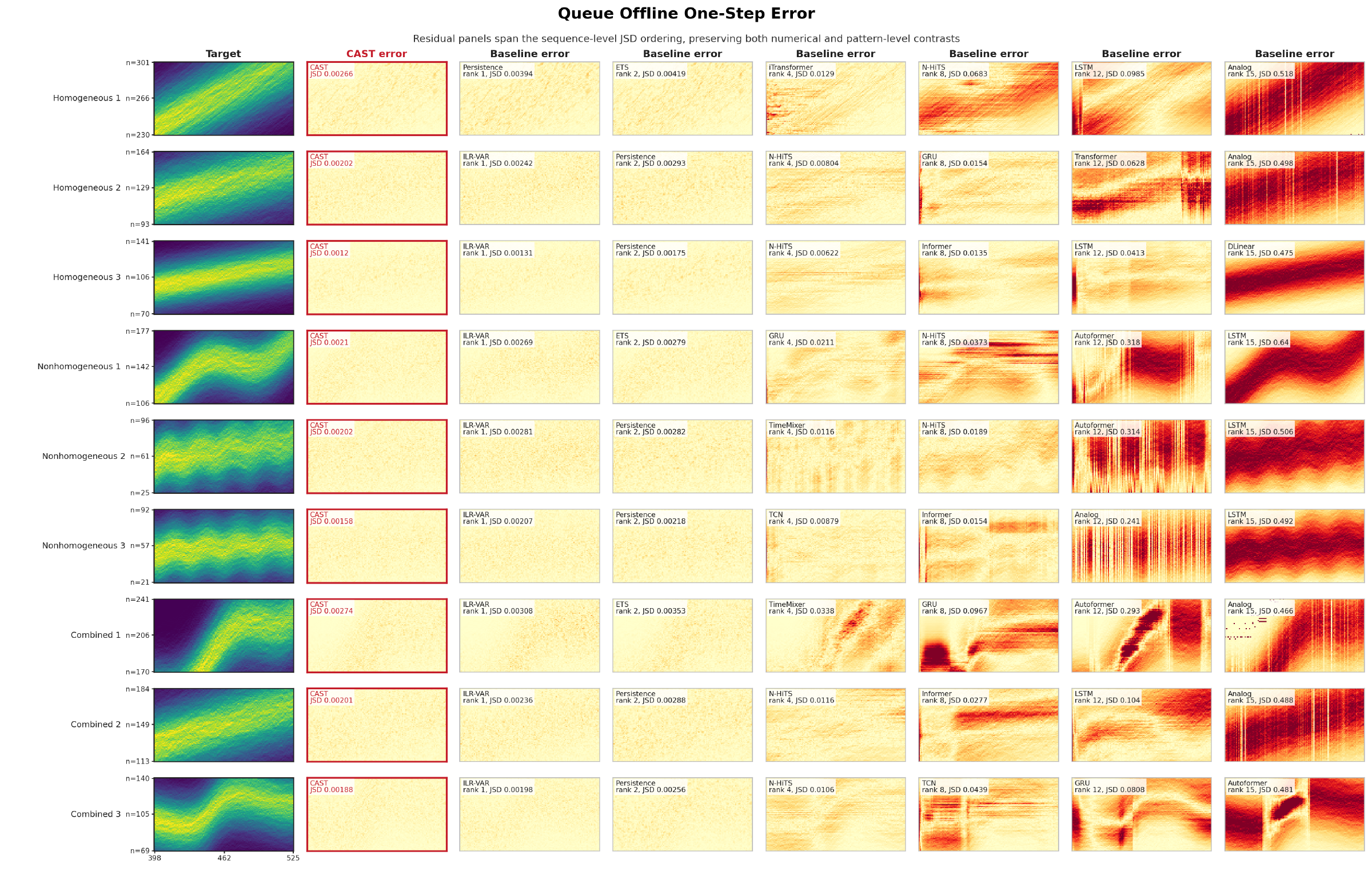}
\caption{Additional offline one-step residual visualizations across homogeneous, nonhomogeneous, and combined queueing regimes. For each target sequence, CAST is compared with baselines spanning a range of sequence-level performance ranks, from numerically competitive methods to substantially weaker predictors. Residual panels show the absolute prediction error and are annotated by the corresponding JSD; lighter and less structured residuals indicate more accurate one-step distribution forecasts. CAST consistently yields diffuse, low-magnitude residuals, whereas many baselines exhibit structured errors aligned with the target ridges, revealing missed temporal shifts even when aggregate scores are close.}
\label{fig:appendix-offline-residual-rank-spread}
\end{figure}

\begin{figure}[t]
\centering
\includegraphics[width=\linewidth]{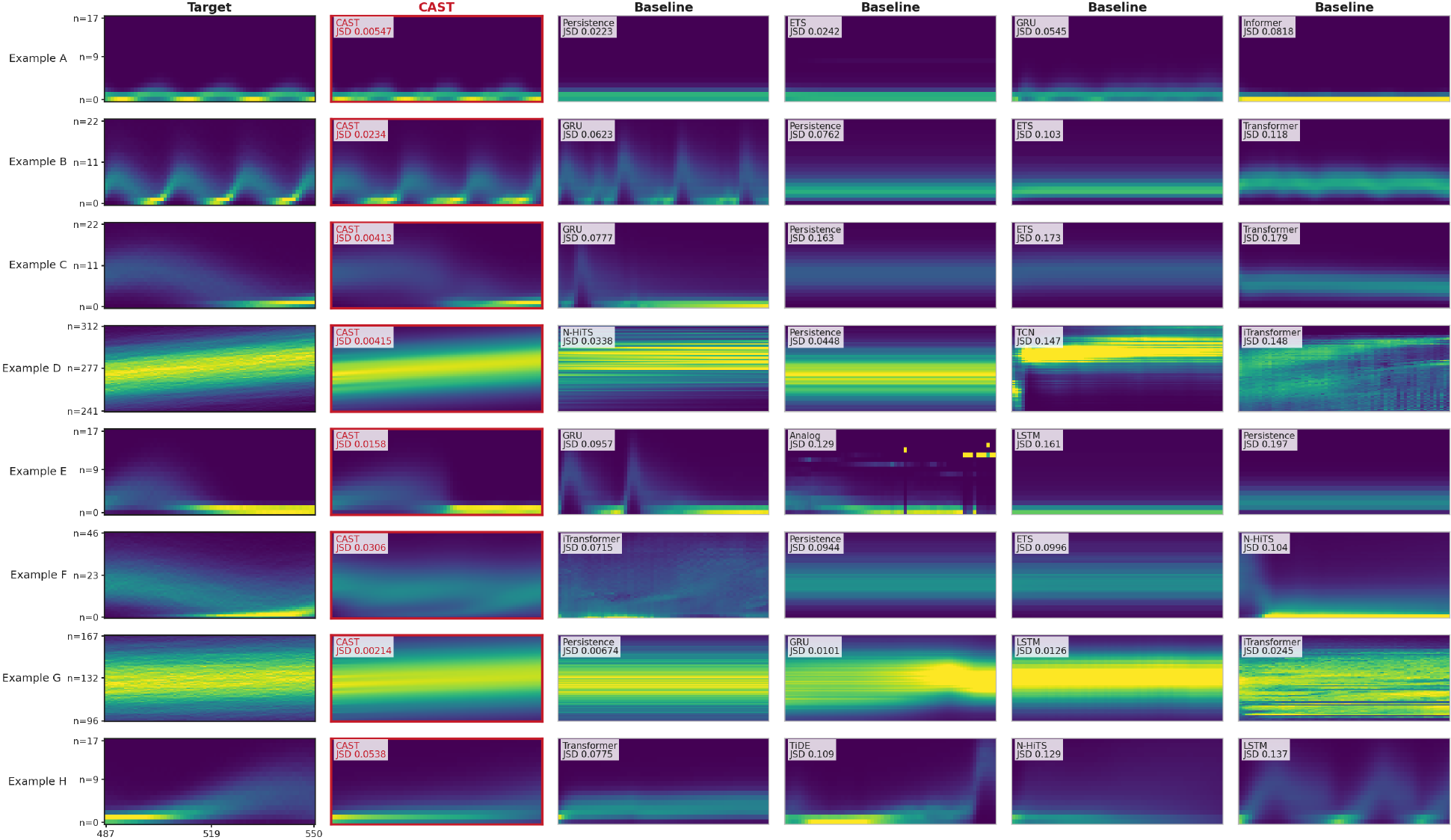}
\caption{Extended rollout visualization for the selected examples. Each row shows the ground-truth queue-state distribution followed by CAST and four representative baselines, annotated by mean rollout JSD. Examples A--D correspond to the main-paper cases, while Examples E--H provide additional held-out systems with different support sizes and temporal dynamics. CAST remains closest to the target evolution across these cases, preserving moving modes and broad distributional shape more reliably than baselines that produce static bands, distorted transitions, or unstable artefacts.}
\label{fig:appendix-rollout-selected-examples}
\end{figure}

\section{Reproducibility and Code Availability}
\label{app:reproducibility}

The code release is available at
\url{https://anonymous.4open.science/r/Causal-Anchored-Simplex-Transport-8775}.
The release contains the CAST implementation, benchmark preprocessing entry points, training and evaluation commands, queue simulation configuration, and instructions for reproducing the reported tables and figures. All experiments use chronological or file-level splits described in Appendix~\ref{app:experimental-details}; public benchmark sources and snapshot notes are listed in Appendix~\ref{app:benchmarks}; queue occupancy data are project-generated from the cited simulation protocol, and the release provides either the simulator and configurations or the processed queue trajectories for reproduction.

\section{Broader Impacts}
\label{app:broader-impacts}

Because CAST targets decision-facing forecasting on aggregate distributions, it is worth discussing positive and negative societal impacts beyond the model-level limitations in Section~\ref{sec:limitations}. Our benchmarks are public aggregate statistical panels and project-generated queueing simulations, so the model itself does not ingest or produce personally identifying data; nevertheless, distribution-valued forecasts can be used as inputs to consequential decisions, which motivates a concrete discussion.

\paragraph{Positive impacts.} CAST is designed for settings where the object of decision is how probability mass over system states evolves. Better distribution-valued forecasts on ordered supports can (i) improve capacity planning for AI and cloud inference infrastructure by anticipating idleness, congestion, and overflow distributions rather than mean load only \citep{mitzenmacher2025queueing}; (ii) support risk-sensitive and distributionally robust decision making by producing probability-law inputs that are directly consumable by CVaR, quantile, and ambiguity-set based planners \citep{li2022quantile,chow2015cvar,wiesemann2014dro,bellemare2017distributional}; (iii) strengthen early-warning pipelines in environmental monitoring (air-quality severity category profiles, storm-event type mixtures) by forecasting the full severity distribution rather than expected values that can hide tail risk; and (iv) aid transportation, energy, and public-health dashboards that already publish share-type indicators (mobility zones, generation-source shares, cause-of-death mixtures), where a distribution-valued forecaster is a more faithful summary than a scalar trend line.

\paragraph{Potential negative impacts and misuse pathways.} CAST is a forecasting operator and does not itself classify individuals, generate content, or take automated actions. The negative-impact pathways are therefore downstream and governance-shaped:
\begin{itemize}
    \item \emph{Distributional forecasts used as automation signals.} If a downstream system automatically allocates resources or triggers interventions from CAST's predicted distributions (e.g., denying service in a region during a predicted congestion spike, triaging care based on predicted mortality-cause mixtures), forecast errors or distributional shift relative to the training snapshot can translate into unfair or unsafe decisions for specific subpopulations. Users should audit forecasts against realized outcomes in deployment and set decision policies that are robust to forecast uncertainty (for example, by coupling CAST with the risk-sensitive planners cited above rather than optimizing expected-value summaries of the forecast).
    \item \emph{Benchmark provenance.} Public panels such as CDC Weekly Deaths, BLS QCEW, EPA AirData, and NOAA Storm Events are publisher-curated aggregates. Their snapshots evolve: update policies, category systems, and revision cycles change over time, so a model tuned on one snapshot may degrade when upstream definitions change. This is not a CAST-specific issue but it matters for any deployment that silently treats new upstream releases as drop-in replacements.
    \item \emph{Latent-group aliasing.} The paper's main theoretical contribution is that fixed-summary forecasters pay an unavoidable $\mathrm{JS}_\pi$ penalty under latent-regime aliasing. The same phenomenon has a fairness angle: if the "latent regime" $Z_t$ correlates with a protected attribute that is not observed at deployment time, even a well-calibrated \emph{average} forecaster may behave unfairly on the regime-conditional subpopulation. CAST's retrieval mechanism helps only to the extent that the causal representation it learns actually separates those regimes; it is not a fairness guarantee in itself.
    \item \emph{Queueing applications.} Queue-distribution forecasts can inform admission control, pricing, or routing policies. Deployments that feed CAST's outputs into such policies should treat the distribution as a planning input rather than a ground-truth, and should preserve human oversight for admission decisions that affect users directly.
\end{itemize}

\paragraph{Mitigations and scope.} We recommend the standard distribution-level forecasting hygiene: (i) monitor realized residuals against forecasts in deployment and retrain or recalibrate when residuals structurally change; (ii) pair CAST with downstream planners that are risk-sensitive or robust to ambiguity, rather than planners that optimize forecast means only; (iii) keep human oversight in the loop whenever the forecast feeds admission, allocation, or triage decisions that affect specific individuals; and (iv) audit how the representation handles latent-group structure by running the diagnostic of Appendix~\ref{app:theory-diagnostics} on deployment data. Because our benchmarks are public aggregate panels and simulated queues, we did not identify a direct high-risk generative or surveillance pathway that would warrant additional model-release safeguards beyond standard scientific documentation and the code release linked in Appendix~\ref{app:reproducibility}.

\end{document}